\newcommand{\attn}{\mathrm{Attention}}
\newcommand{\ffn}{\mathrm{FFN}}
\newcommand{\multiselfattn}{\mathrm{MultiHeadSelfAttention}}
\newcommand{\selfattn}{\mathrm{SelfAttention}}
\newcommand{\softmatch}{\mathrm{softmatch}}
\newcommand{\transf}{\mathrm{Transformer}}
\newcommand{\bdot}{\bullet}
\newcommand{\E}{\mathds{E}}
\newcommand{\N}{\mathds{N}}
\newcommand{\W}{\mathds{W}}
\newcommand{\diam}{\mathrm{diam}}
\newcommand{\defn}[1]{\textbf{#1}}
\newtheorem*{assumption*}{Assumption}
\newtheorem{assumption}{Assumption}
\newcommand{\nnewline}{}
\title[On the Regularity of Attention]{On the Regularity of Attention}
\begin{document}

\maketitle

\begin{abstract}%
    Attention is a powerful component of modern neural networks across a wide variety of domains. In this paper, we seek to quantify the regularity (i.e. the amount of smoothness) of the attention operation. To accomplish this goal, we propose a new mathematical framework that uses measure theory and integral operators to model attention. We show that this framework is consistent with the usual definition, and that it captures the essential properties of attention. Then we use this framework to prove that, on compact domains, the attention operation is Lipschitz continuous and provide an estimate of its Lipschitz constant. Additionally, by focusing on a specific type of attention, we extend these Lipschitz continuity results to non-compact domains. We also discuss the effects regularity can have on NLP models, and applications to invertible and infinitely-deep networks.
\end{abstract}

\tableofcontents


\section{Introduction}

Attention~\citep{bahdanau2014neural,vaswani2017attention} has recently joined the multi-layer perceptron, convolution, and recurrent neural network cell as a fundamental building block of modern neural networks. However, much is still not well understood about the mathematical properties of attention; in this paper, we study the question of regularity. In particular, we seek to understand how ``close'' the outputs of the attention operation are in terms of the closeness of the inputs and the parameters of the attention block.

This problem is important for various reasons.
Firstly, regularity is a basic property of a function with important implications for tasks such as feature learning; Lipschitz regularity in particular plays an important role, see e.g. \cite{mallat2016understanding}. Secondly, repeated composition of a function magnifies its regularity (or lack thereof) \citep[Chapter~10.7]{goodfellow2016deep}; since attention is extensively used in very deep architectures, understanding the regularity of this essential building block can help us better understand the training and stability of these models. Finally, having a precise theory allows us to make testable predictions about experiments (possibly to generate improvements) and also \emph{post hoc} analysis of experimental results to better understand why a given behaviour was observed.

Because of some special properties of attention --- namely self-interaction and the ability to process variable length inputs --- special care must be taken to model attention \emph{and} obtain a robust theory. For example, it is not clear \emph{a priori} how to measure the closeness of two inputs to attention that have different numbers of vectors. In this paper, we address these issues by formulating attention in terms of measure theory and integral operators, and then use this framework to study its regularity in terms of Lipschtiz continuity.

We also investigate the implications of regularity on a number of concrete scenarios. We study how regularity can help certain applications by providing robustness to the learned representations but hurt others when the regularity of the model does not match the regularity of the task. We also study how regularity impacts the properties of self-attention networks such as their invertibility and the existence of infinite-depth limits.


The paper is organized as follows. We first introduce preliminaries in Section~\ref{sec:prelim}. In Section~\ref{sec:attention}, we describe attention using measure theory. We then obtain quantitative Lipschitz continuity estimates for self-attention in Section~\ref{sec:stability}. We apply these results to some concrete problems in Section~\ref{sec:applications}.

\section{Related Work}\label{sec:related}

As noted by \cite{smola2019attention}, the original notion of attention appears in statistics in the form of the Watson-Nadaraya estimator \citep{watson1964smooth, nadaraya1964estimating} which implements a data-dependent regression model. 
The term ``attention'' and the modern ``query-key-value'' formulation comes from \cite{bahdanau2014neural} who use attention for sequence alignment in a recurrent neural translation model. A similar setup was used in \cite{graves2014neural} for differentiable, content-based addressing of a memory array. In \cite{sukhbaatar2015end} and \cite{seo2016bidirectional}, attention is used for question answering, machine reading comprehension, and language modelling. The extremely successful ``Transformer'' architecture was introduced in \cite{vaswani2017attention} and demonstrated that one could build powerful neural networks using attention as the main component. This led to important developments in language modelling~\citep{devlin2018bert,radford2018improving}, graph modelling \citep{velivckovic2017graph}, image modelling \citep{parmar2018image}, and set modelling \citep{lee2018set}. Recently, \cite{baker2019emergent} used attention in the policy architecture of a multi-agent reinforcement learning problem. \nnewline

Concurrent to our work, there has been a recent flurry of activity in the study of the properties of attention-based networks from an empirical and theoretical perspective. As discussed in Section~\ref{sec:applications}, \cite{kim2020lipschitz} studies the Lipschitz constant of self-attention as a map from $\R^{d\times N}\to \R^{d\times N}$. Other works studying various theoretical aspects of attention (not necessarily regularity) include \cite{katharopoulos2020transformers, bhattamishra2020computational,hron2020infinite,levine2020limits}.



We were mathematically inspired by \cite{moral2004feynman} who studied self-interacting ``Feynman-Kac models''  using semigroup techniques (including contractions for nonlinear operators on measures). An interacting particle interpretation of attention is studied in \cite{lu2019understanding} using tools from dynamical systems theory. 

\section{Preliminaries}\label{sec:prelim}

\subsection{Attention}\label{sub:structured}

The fundamental definition of attention is due to \cite{bahdanau2014neural}, which we provide below with some additional terminology for the various components that we will study.
\begin{definition}[Attention, \cite{bahdanau2014neural}]\label{defn:attention}
		Let $K=(k_1,\dots,k_N)\subset\R^{d_k}$ be a collection of \textbf{keys}, $V=(v_1,\dots,v_N)\subset\R^{d_v}$ a collection of corresponding \textbf{values}, and $q\in\R^{d_q}$ a \textbf{query}. Also, let $a:~\R^{d_q}\times\R^{d_k}\to \R$ be a measurable \textbf{similarity function}. Then \textbf{attention} is the mapping
		\vspace{-0.1cm}
		\[
			\attn(q,K,V) := \sum_{i=1}^N \softmatch_a(q,K)_i\cdot v_i,
		\]
		\vspace{-0.1cm}
		\noindent
		where $\softmatch_a(q,K)$ is a probability distribution over the elements of $K$ defined as
        \begin{align}\label{eq:softmatch}
			\softmatch_a(q,K)_i:= \frac{\exp(a(q,k_i))}{\sum_{j=1}^N \exp(a(q,k_j))}.
        \end{align}
	\end{definition}
	
	While $\attn(\bdot,K,V)$ is defined point-wise for a given query, it is almost always used to process a set of queries $Q=\{q_1,\dots,q_M\}\subset\R^{d_q}$ in parallel. Thus, we will usually write $\attn(Q,K,V) := \{\attn(q_i,K,V)\}_{i=1}^M$. Also, while \mbox{$|K|=|V|=N$}, in general $M$ does not have to equal $N$. When $K=V=Q$, we call the following mapping \textbf{self-attention}:
	\[
	    Q\mapsto \selfattn(Q) := \attn(Q,Q,Q).
	\]
    We are primarily interested in self-attention as it \emph{can be composed to arbitrary depth}, making it a key building block of many neural network architectures.\nnewline


\subsection{Markov Kernels}
\label{sub:features}

In the sequel, $(E,\calE)$ denotes a subset of $\R^d$ endowed with its Borel $\sigma$-algebra, and  $\calP(E)$ the space of probability measures on $E$. We use the following notation for expectations w.r.t. $\mu \in \calP(E)$: for a real-valued measurable function $f$, we denote  $\mu(f):= \int f(x)\mu(\dd x)$ when it exists.

\medskip

Our framework will heavily rely on linear transformations of measures modelled by Markov kernels; see e.g.~\cite{moral2004feynman} for an account that is consistent with our notation.

\begin{definition}[Markov kernel]\label{defn:markov_kernel} 
A \textbf{Markov kernel} is a mapping $M:E\times \calE\to [0,1]$ such that and $\forall x\in E, M(x,\bdot) \in \calP(E)$ and $\forall A\in \calE$, $x\mapsto M(x,A)$ is measurable.
\end{definition} 
A Markov kernel $M$ defines a linear operator $\calP(E)\to \calP(E)$ by $\mu M(\dd y):= \int \mu(\dd x) M(x,\dd y)$. It also defines a linear operator on measurable functions by $M(f)(x) := \int f(y) M(x,\dd y)$. Markov kernels $M,N$ can be composed by integration, 
$MN(x,\dd z) := \int M(x,\dd y)N(y,\dd z)$. 

\section{Modelling Attention}\label{sec:attention}
In this section, we model attention~\citep{bahdanau2014neural} and the Transformer~\citep{vaswani2017attention} in measure-theoretic language. Our construction casts the action of attention on collection of vectors as a nonlinear Markov transport on $\calP(E)$ by  reformulating existing linear algebra and point-wise operations in-terms of operators on $\calP(E)$. 

\subsection{Basic Model of Attention}
The fundamental parts of $\attn$ from Definition~\ref{defn:attention} are: the $\softmatch_a$ operation, the key-value correspondence, and the value-averaging w.r.t. the softmatch distribution. We will treat each of these in turn. 

\paragraph{Softmatch and Botzmann-Gibbs Transformations.} At the core of the softmatch function, and indeed attention itself, are the interactions between queries and keys. These interactions are a specific case of a nonlinear measure transformation, the Boltzman-Gibbs transformation.

\begin{definition}[Boltzmann-Gibbs Transformation]
	Let $g:E\to \R_{>0}$ be bounded and measurable.
	The \textbf{Boltzmann-Gibbs transformation} associated to
	$g$ is the mapping $\Psi_g:\calP(E)\to \calP(E)$: \[ \Psi_{g}(\nu)(dy) := \frac{g(y) \nu(dy)}{\nu(g)}.\]
\end{definition}
 
\noindent To implement the $\softmatch_a$ operation, we will need a function $G:E\times E\to \R_+^*$ taking the form $G(x,y)=\exp(a(x,y))$, where $a$ is a similarity function as in Definition~\ref{defn:attention}. We call $G$ an {\bf interaction potential}.

\begin{definition}[Softmatch Kernel] For an interaction potential $G$, we call the \textbf{softmatch kernel} the family of Markov kernels  $\{\Psi_{G}(\nu)\}_{\nu\in \calP(E)}$ indexed by $\nu\in \calP(E)$, such that for $A\in \calE$
\[
\Psi_{G}(\nu)(x, A) = \int_A \Psi_{G(x,\bdot)}(\nu)(\dd y) = \frac{\int_A G(x,y) \nu(\dd y)}{\int_E G(x,y) \nu(\dd y)}.
\]
\end{definition}
In other words, for a given $x \in E$ and $\nu \in \calP(E)$, the softmatch kernel $\Psi_{G}(\nu)(x, dy)$ is the Boltzmann-Gibbs transformation associated to $G(x,\bdot)$.
To see how $\Psi_G$ can be used to model the softmatch operation, we introduce some simple but useful constructions from measure theory.

\paragraph{Empirical measure mapping.}
Denote by $\calP_\delta(E):=\{\delta_x \st x\in E\}$ the subset of \defn{Dirac measures} in $\calP(E)$. There is a natural bijection between $E$ and $\calP_\delta(E)$ defined by $x\leftrightarrow \delta_x$ which will be the primary entry point for measure theory in our model of attention. We can associate to any set of vectors 
$X=\{x_1,\dots,x_N\}\subseteq E\subseteq \R^d$ a measure in $\calP(E)$ via the \defn{empirical measure mapping}:
\[
	X\mapsto m(X):=\frac{1}{N}\sum_{i=1}^N \delta_{x_i}.
\]In what follows, we will often use $X$ and $ \{\delta_{x_1},\dots,\delta_{x_N}\}$ interchangeably to represent the individual vectors and $m(X)$ to represent the joint configuration of $X$. We will see below that $m(X)$ is a very natural object to represent this joint configuration and how it behaves with attention.\nnewline

Now consider a ``query'' representation $\delta_q$, ``key'' representations $K=\{\delta_{k_1},\dots,\delta_{k_N}\}$, and the empirical measure $m(K)$. The softmatch kernel models the interaction between $q$ and $K$ using the left-action of the Markov kernels $\Psi_G(m(K))$ on the Dirac measure $\delta_q$ induced by integration:
\begin{align*}
	\delta_q\Psi_{G}(m(K)) &= \int \delta_{q}(\dd q')\Psi_{G(q',\bdot)}(m(K))
	=\sum_{s=1}^N \frac{G(q,k_s)}{\sum_{r=1}^N G(q,k_r)}\delta_{k_s}.
\end{align*}
Furthermore, given a set of queries  $Q=\{\delta_{q_1},\dots,\delta_{q_M}\}$, we can leverage the linearity of integration to model the interaction between the two sets of representations $Q$ and $K$ using the same principle:
\begin{align*}
	m(Q)\Psi_{G}(m(K)) &= \frac{1}{M} \sum_{t=1}^M \int \delta_{q_t}(\dd q)\Psi_{G(q,\bdot)}(m(K))
	= \frac{1}{M}\sum_{t=1}^M\sum_{s=1}^N \frac{G(q_t,k_s)}{\sum_{r=1}^N G(q_t,k_r)}\delta_{k_s}.
\end{align*}
This new measure represents the joint configuration of the set of queries $Q$ \emph{after} they have interacted with the keys $K$ through the potential $G$ and the associated Boltzmann-Gibbs transformation. It is a \emph{weighted sum} of particle measures, and will allow us to model the softmatch operation from Eq.~\eqref{eq:softmatch}. 

\paragraph{Key-Value Relationships.} To generalize the relationship between keys and values, we now introduce the lookup kernel.
\begin{definition}[Lookup Kernel]
	Assume that the keys and values come from (Borel) measurable subsets of $\R^{d_k},\R^{d_v}$ resp.
	A lookup kernel is a Markov kernel, 
	$L:\R^{d_k}\times\calB(\R^{d_v})\to [0,1]$, 
	also denoted $L(k,\dd v)$, that maps keys to distributions on values. 
	When the mapping from keys to values is a deterministic function $\ell$, we have $L(k,\dd v) = \delta_{\ell(k)}(\dd v)$.
\end{definition}
For self-attention, $\ell(x) = x$ is the natural choice of the deterministic lookup function, and for the Transformer (see App.~\ref{app:transformer}), the natural choice is $\ell(k)=W^V k$. In general, to study regularity, we assume there exists \emph{some} well-behaved function $\ell:\R^{d_k}\to \R^{d_v}$ that realizes the correspondence $k_i\leftrightarrow v_i$ --- this holds for most realistic implementations of attention such as those above.
\begin{remark}
    The most general case of attention, when there is no prescribed correspondence between $k_i$ and $v_i$, could be realized by a function such as 
    \[
	    \ell(k) = \sum_{i=1}^n \ind{k=k_i}v_i.
	\]but this not in general regular without additional assumptions. 
    
\end{remark}


\paragraph{Averaging and Measure Projections.}
In the remainder of this paper, we will make the following technical assumption, which ensures that the operations we describe are well-defined.
\begin{assumption}\label{assump:convex}
    $E\subset\R^d$ is convex.
\end{assumption}

The final element of our construction is the averaging w.r.t. the set of values. 
Denote by $\Pi:\calP(E)\to \calP_\delta(E)$ the \textbf{measure projection} of a probability measure $\mu\in \calP(E)$ onto the subset of Dirac measures $\calP_\delta(E)$ defined by
\begin{equation}\label{eq:pi}
    \Pi[\mu]:= \delta_{\ovl{\mu}},~~~\ovl{\mu}:=\int x \mu(\dd x)\in E
\end{equation}
whenever $\ovl{\mu}$ exists (e.g. when $\mu$ has finite first moments). We claim (to be justified in a moment) that the averaging w.r.t. values is accomplished by the measure projection $\Pi$ described in Eq.~\eqref{eq:pi}.

\paragraph{The Attention Kernel.}
Combining these, we obtain a model for attention, the attention kernel.
\begin{definition}[Attention Kernel]\label{defn:attention_kernel}
	The \textbf{attention kernel}, denoted $\mbf{A}$, is the composition of the measure projection $\Pi$, the softmatch kernel and the lookup kernel, defined for $q \in E$ and $\mu \in \calP(E)$ as:
	\begin{align*}
        \mbf{A}_\mu(q,dz)&:=\Pi[\Psi_{G(q,\bdot)}(\mu)L](dz)
        = \Pi\left[\int \Psi_{G(q,\bdot)}(\mu)(\dd k)L(k,\dd v)\right](\dd z),
	\end{align*}
\end{definition}

where the softmatch and lookup kernels are composed by integration as described after Definition~\ref{defn:markov_kernel} and $\Pi$ is applied to the resulting measure (which is defined per $q$). Our first result is that this attention kernel is consistent with attention from Definition~\ref{defn:attention}, for suitable choices of $G$ and $L$.
\begin{proposition}\label{prop:measure_attn}
	Let $G(x,y)=\exp(a(x,y))$, $L(k,\dd v)=\delta_{\ell(k)}(\dd v)$, and $Q,K,V$ be as in the definition of attention. Then, using the left action of kernels on measures, the mapping:
	\[
		(Q,K,V)\mapsto \left\{\delta_{q_1}\mbf{A}_{m(K)},\dots,\delta_{q_T}\mbf{A}_{m(K)}\right\}
	\]implements attention as in Definition~\ref{defn:attention}.
\end{proposition}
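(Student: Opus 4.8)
The plan is to verify the claim by a direct, step-by-step unfolding of $\mbf{A}_{m(K)}$ applied to a single Dirac query $\delta_q$, since the kernel is defined per query and the extension to the full set $\{q_1,\dots,q_T\}$ is then immediate (each query is processed independently). First I would compute the softmatch step: substituting $\mu=m(K)=\frac{1}{N}\sum_{i=1}^N\delta_{k_i}$ into the Boltzmann-Gibbs transformation $\Psi_{G(q,\bdot)}$, the normalizing factors $1/N$ cancel between numerator and denominator, yielding the finitely-supported measure $\sum_{s=1}^N \frac{G(q,k_s)}{\sum_r G(q,k_r)}\delta_{k_s}$, exactly as already displayed in the discussion preceding the statement.

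Next I would compose this with the lookup kernel $L(k,\dd v)=\delta_{\ell(k)}(\dd v)$ by integration. Because $L$ is deterministic, integration pushes each atom $\delta_{k_s}$ forward to $\delta_{\ell(k_s)}$, and invoking the assumed correspondence $\ell(k_i)=v_i$ that realizes $k_i\leftrightarrow v_i$ gives $\sum_{s=1}^N \frac{G(q,k_s)}{\sum_r G(q,k_r)}\delta_{v_s}$. Finally I would apply the measure projection $\Pi$: its barycenter is the weighted average $\sum_{s=1}^N \frac{G(q,k_s)}{\sum_r G(q,k_r)}\,v_s$, and substituting $G(x,y)=\exp(a(x,y))$ turns each weight into $\softmatch_a(q,K)_s$. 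This identifies the barycenter with $\attn(q,K,V)$, so that $\delta_q \mbf{A}_{m(K)}=\delta_{\attn(q,K,V)}$, which corresponds to $\attn(q,K,V)$ under the bijection $E\leftrightarrow\calP_\delta(E)$.

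The only point requiring genuine care — and the reason Assumption~\ref{assump:convex} is invoked — is the well-definedness of $\Pi$ on the intermediate measure. I would note that the measure fed into $\Pi$ is finitely supported, so its first moment trivially exists, and its barycenter is a convex combination of the value vectors $v_s\in E$; convexity of $E$ then guarantees this barycenter lands back in $E$, so $\Pi$ produces a legitimate element of $\calP_\delta(E)$ and $\mbf{A}_{m(K)}$ is a bona fide map $E\to\calP(E)$. The remainder is bookkeeping: each of the three operations is linear in the relevant argument, so no interchange-of-limits or measurability subtleties arise beyond those already guaranteed by measurability of $a$ (hence $G$) and $\ell$.
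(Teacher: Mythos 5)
Your proof is correct and follows essentially the same route as the paper's own: unfold $\Psi_{G(q,\bdot)}(m(K))$ into the finitely-supported weighted measure on the keys, push it through the deterministic lookup kernel to land on the values, apply $\Pi$ to extract the weighted barycenter, and identify that barycenter with $\attn(q,K,V)$ via the bijection $x\leftrightarrow\delta_x$. Your extra observation that Assumption~\ref{assump:convex} is what guarantees the barycenter lies in $E$ (so that $\Pi$ is well-defined) is a correct refinement that the paper's proof leaves implicit.
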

\begin{proof}
	Using the remarks from earlier, for $q\in \R^{d_q}$, we have:
	\begin{align*}
		\Psi_{G(q,\bdot)}&(m(K))L
		= \int \sum_{j=1}^N \frac{G(q,k_j)}{\sum_{p=1}^N G(q,k_p)}\delta_{k_j}(\dd k)L(k,\dd v)
		= \sum_{j=1}^N \frac{G(q,k_j)}{\sum_{p=1}^N G(q,k_p)}\delta_{v_j}(\dd v).
	\end{align*}
	Applying $\Pi$ yields: $\mbf{A}_{m(K)}(q,\dd v) = \delta_{\sum_{j=1}^N \frac{G(q,k_j)}{\sum_{p=1}^NG(q,k_p)}v_j}(\dd v)$. Using the (linear) left-action of this kernel on $\delta_{q_t}$, we then obtain:
	\begin{align*}
		\delta_{q_t}\mbf{A}_{m(K)}(\dd v) &= \int \delta_{q_t}(\dd q)\mbf{A}_{m(K)}(q,\dd v)
		= \delta_{\sum_{j=1}^N \frac{G(q_t,k_j)}{\sum_{p=1}^NG(q_t,k_p)}v_j}(\dd v).
	\end{align*}
	Plugging in the definition of $G$ and using the usual bijection $\delta_x\leftrightarrow x$ concludes the proof.
	\end{proof}
	
	\paragraph{Attention as a System of Interacting Particles.}
	Let us step back and understand the attention kernel $\mbf{A}$ from a higher level. Consider self-attention: we have effectively factorized the original, linear-algebraic self-attention operation into a series of measure transformations: 
	\[
		E \overset{x\mapsto \delta_x}{\longrightarrow} \calP_\delta(E)\overset{\Psi_GL}{\longrightarrow} \calP(E) \overset{\Pi}{\longrightarrow} \calP_\delta(E) \overset{\delta_x\mapsto x}{\longrightarrow} E.
	\]More importantly, we \emph{have a closed-form expression for the evolution of the joint configuration $m(Q)$ of $Q$}, i.e. $m(Q)\mapsto m(Q)\mbf{A}_{m(Q)}$. Since interaction with the joint configuration is central to attention, 
	having a framework that describes its evolution will be vital to further analysis. \nnewline

	Moreover, as we noted earlier, self-attention can be composed arbitrarily. Indeed, let $Q^0:=Q$ and consider the evolution of a the set of ``particles'' $Q^h=\{\delta_{q^h_1},\dots,\delta_{q^h_M}\}$ for $h=0,1,2,\dots,H-1$ whose dynamics are given by
	\[
	    q^{h+1}_i \sim \mbf{A}^h_{m(Q^h)}(q^h_i,\bdot)
	\]or equivalently as a measure-valued equation
	\[
	     \delta_{q^{h+1}_i} = \delta_{q^h_i}\mbf{A}^h_{m(Q^h)}.
	\]

	Our framework shows that self-attention networks
	are actually simulating deterministic interacting particle systems for a finite number of time steps corresponding to the number of layers $H$. The representations one obtains are the states of the system after $H$ steps of the dynamics. 
	
	\begin{remark}\label{rem:interacting}
	    Interestingly, the particle interpretation above is studied in \cite{lu2019understanding} using tools from dynamical systems theory. The authors recognize the Transformer (with the residual connection) as a coupled system of particles evolving under diffusion-convection ODE dynamics, and study this system using the a numerical scheme for the underlying ODE.
	\end{remark}

    \begin{remark}[Connection with Expectation]
        Let us also point out a connection with Bayesian statistics: when $G(q,\bdot)=p(q|\bdot)$ is a likelihood function, $\nu\mapsto \Psi_{G(q,\bdot)}(\nu)$ is the mapping which takes a prior distribution $\nu(\dd k)$ over keys and returns a posterior distribution $P(\dd k|q)$. Moreover, assuming that $q\to k\to v$ forms a Markov chain, $\Psi_GL(q,\dd v)$ models the conditional probability of $v|q$. Finally, the measure projection operator effectively reduces this to a measure concentrated on a single point, $\E[v|q]$, which is consistent with the existing interpretation of attention.
    \end{remark}

\subsection{Extension to the Transformer}

We now sketch how to extend the measure-theoretic model of self-attention described in the previous section to the popular Transformer encoder architecture \citep{vaswani2017attention}. It is a straightforward application of the techniques above. We only describe here how our framework can model a single head Transformer\footnote{We only consider the encoder part of the transformer, since it uses self-attention. Our framework is fully compatible with the cross-attention from the transformer decoder \citep{vaswani2017attention}, see Section~\ref{subsec:cross_attn}}, and refer the interested reader to  Appendix~\ref{app:transformer} for the extension to a full multi-headed Transformer. We seek to model
\begin{align}\label{eq:transformer}
    	\transf(X) = \ffn \circ \selfattn(X),
\end{align}
where $X=\{x_1,\dots,x_N\}\subset \R^d$ is the input data, $\selfattn(\bdot)$ is the scaled dot-product attention \citep{vaswani2017attention} and $\ffn(\bdot)$ represents a feedforward neural network. We set $\wtilde{a}(x,y)=x^Ty/\sqrt{d}$ and let
\[
	a(x,y) = \wtilde{a}\left(W^Qx, W^Ky\right),~~~~L(k,\dd v)=\delta_{W^Vk}(\dd v),
\]where $W^Q,W^K,W^V$ are matrices in $\R^{d\times d}$. These correspond to the various matrix operations performed by the Transformer. We let $f:E\to E$ be the FFN in~\eqref{eq:transformer} and define the FFN kernel as $\mbf{F}(x,\dd y)=\delta_{f(x)}(\dd y)$. Using the attention kernel $\mbf{A}$ from Definition~\ref{defn:attention_kernel}, we define $\mbf{T} := \mbf{A}\mbf{F}$, and show in the proposition below that $\mbf{T}$ implements the self-attention transformer (proof in App.~\ref{app:transformer}).

\begin{restatable}{proposition}{measuretransformer}\label{prop:measure_transformer}
	Let $X=\{x_1,\dots,x_N\}\subset\R^d$ be a collection of inputs. The nonlinear Markov transport equation $\delta_{x_i}\mapsto \delta_{x_i}\mbf{T}_{m(X)}$ implements the self-attention Transformer.
\end{restatable}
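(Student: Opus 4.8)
The plan is to reduce the statement to Proposition~\ref{prop:measure_attn} for the attention part and then dispatch the feedforward composition by a single kernel computation, exploiting the fact that every intermediate measure produced by the construction is a Dirac measure. First I would instantiate Proposition~\ref{prop:measure_attn} with the Transformer's choices $G(x,y)=\exp(\wtilde{a}(W^Qx,W^Ky))$ and $L(k,\dd v)=\delta_{W^Vk}(\dd v)$, taking $Q=K=V=X$ so that we are in the self-attention regime. This gives the identity
\[
\delta_{x_i}\mbf{A}_{m(X)}(\dd y) = \delta_{\selfattn(X)_i}(\dd y),
\qquad
\selfattn(X)_i = \sum_{j=1}^N \frac{G(x_i,x_j)}{\sum_{p=1}^N G(x_i,x_p)}\, W^Vx_j,
\]
and the only thing to verify at this stage is that the factorization $a(x,y)=\wtilde{a}(W^Qx,W^Ky)$ with $\wtilde{a}(x,y)=x^Ty/\sqrt{d}$ reproduces exactly the scaled dot-product weights $\softmatch_a(x_i,X)_j$, which is immediate.

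Next I would compose on the right with the feedforward kernel $\mbf{F}(x,\dd z)=\delta_{f(x)}(\dd z)$. Using the definition $\mbf{T}=\mbf{A}\mbf{F}$, the composition of kernels by integration (as introduced after Definition~\ref{defn:markov_kernel}), and the fact that $\delta_{x_i}\mbf{A}_{m(X)}$ is a Dirac measure, I would compute
\[
\delta_{x_i}\mbf{T}_{m(X)}(\dd z)
= \int \delta_{\selfattn(X)_i}(\dd y)\,\mbf{F}(y,\dd z)
= \delta_{f(\selfattn(X)_i)}(\dd z)
= \delta_{\ffn(\selfattn(X)_i)}(\dd z).
\]
Applying the bijection $\delta_x\leftrightarrow x$ then identifies the right-hand side with $\ffn\circ\selfattn(X)_i = \transf(X)_i$, proving the claim position by position. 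I would also make explicit that the convention $MN(x,\dd z)=\int M(x,\dd y)N(y,\dd z)$ forces $\mbf{A}\mbf{F}$ to apply attention first and the FFN second, matching the composition order in $\ffn\circ\selfattn$ rather than its reverse.

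The computation itself is essentially bookkeeping, so I expect the main (if modest) obstacle to be well-definedness of the intermediate objects rather than the algebra. Specifically, the measure projection $\Pi$ inside the attention kernel requires the post-lookup measure $\sum_{j}\softmatch_a(x_i,X)_j\,\delta_{W^Vx_j}$ to lie in $\calP(E)$ and its barycenter $\sum_j \softmatch_a(x_i,X)_j\,W^Vx_j$ to lie in $E$; this is exactly where Assumption~\ref{assump:convex} (convexity of $E$) enters, together with the implicit requirement that $W^V$ and the map $f$ keep the representations inside $E$ so that both $\Pi$ and $\mbf{F}$ are applied to admissible measures. Flagging these conditions, and noting that they hold for the realistic implementations discussed in the text, is the one place where the argument needs genuine care beyond chaining the two kernel identities.
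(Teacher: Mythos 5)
Your proof is correct, and for the statement as written in the main text (where $\mbf{T}=\mbf{A}\mbf{F}$ is built from the single-head attention kernel) it is exactly the intended argument: instantiate Proposition~\ref{prop:measure_attn} with $G(x,y)=\exp(\wtilde{a}(W^Qx,W^Ky))$, $L(k,\dd v)=\delta_{W^Vk}(\dd v)$ and $Q=K=V=X$, then compose on the right with the deterministic kernel $\mbf{F}$ by integration. The one substantive difference from the paper is scope: in the appendix the paper redefines $\mbf{T}:=\mbf{M}\mbf{F}$ with $\mbf{M}$ the \emph{multi-head} attention kernel, and its written proof is devoted entirely to that new element --- showing each head's kernel $\mbf{A}^h_{m(X)}(x_i,\bdot)$ is a Dirac mass at $y^h_i=\multiselfattn(x_i,X,X)_h$, that composing with the output kernels $\mbf{O}^h$ and averaging yields the mixture $\frac{1}{H}\sum_{h=1}^H \delta_{y^h_i W^O_h\cdot H}$, and that applying $\Pi$ to this (non-Dirac) mixture recovers the concat-and-matmul $\sum_h y^h_i W^O_h = \multiselfattn(x_i,X,X)$ --- while the single-head case and the FFN composition are dispatched in one sentence as already following from the discussion of standard attention. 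So your argument proves the stated single-head proposition completely and in fact spells out the steps the paper leaves implicit (the order-of-composition point, and the well-definedness of $\Pi$ under Assumption~\ref{assump:convex} together with the requirement that $W^V$ and $f$ preserve $E$, which the paper never addresses); conversely, to cover the paper's appendix version you would need to add the $\mbf{M}$ computation, since averaging Dirac masses across heads produces a genuine mixture and only the extra projection $\Pi$ turns it back into the Transformer output.
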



\section{Regularity of Attention}\label{sec:stability}

In this section, we consider self-attention as a non-linear map from $\calP(E)$ to $\calP(E)$ through $\mbf{A}:\mu \to \mu \mbf{A}_\mu$. To derive a Lipschitz contraction estimate, we must first metrize $\calP(E)$.

\paragraph{Background.}
We will work with the Wasserstein metric on $\calP(E)$. Let $\calP_1(E)$ be the set of probability measures with finite 1st moment.  
The {\bf 1-Wasserstein distance} between $\mu,\nu\in \calP_1(E)$ is
\[
	\W_1(\mu,\nu):=\sup_{f\in Lip_1(E)} \left|\int f\dd\mu - \int f\dd\nu\right|.
\]
$\W_1$ is a metric on $\calP_1(E)$ which turns the pair $\calW_1:=(\calP_1(E),\W_1)$ into a complete, separable metric space \cite[ Ch 6]{villani2008optimal}. 

\subsection{Lipschitz Contractions: Bounded Case}\label{subsec:bounded_stability}
We now derive a Lipschitz contraction estimate for the map $\mu\mapsto \mu\mbf{A}_\mu$ on the metric space $(\calP_1(E),\W_1)$  via an inequality of the form:
\[
	\sup_{\mu\neq \nu}\W_1(\mu\mbf{A}_\mu,\nu\mbf{A}_\nu)\leq \tau(\mbf{A}) \W_1(\mu,\nu)
\]for some constant $\tau(\mbf{A})$ to be determined. 
In this Section, we make the additional assumption.
\begin{assumption}\label{assump:1}
	$E\subset\R^d$ is compact.
\end{assumption}

We will estimate the Wasserstein contraction coefficient defined below.
\begin{definition}[Wasserstein Contraction Coefficient]
		Let $\Phi:\calP_1(E)\to \calP_1(E)$ be a (possibly nonlinear) mapping. We define the \defn{Wasserstein contraction coefficient} by
		\[
			\tau(\Phi):=\sup_{\mu\neq \nu}\frac{\W_1(\Phi(\mu), \Phi(\nu))}{\W_1(\mu,\nu)}.
		\]
	\end{definition}
\begin{remark}
    This definition is a natural extension of two concepts from applied probability: it is the generalization of the total variation contraction coefficient studied in \cite{moral2004feynman} for nonlinear Markov operators to the 1-Wasserstein distance; it is also the extension of the generalized ergodic coefficient from \cite{rudolf2018perturbation} to nonlinear Markov operators.
\end{remark}

Also, for $f:E\to \R$, the Lipschitz semi-norm is $\|f\|_{Lip}:= \sup_{x \neq y} |f(x) - f(y)| / d(x,y)$. For a function $G$ of two variables, $G: E \times E \to \R$, set:
\begin{align*}
    \|G\|_{Lip,\infty} := \sup_{x\in E}\|G(\bdot,x)\|_{Lip} \hspace{2cm}  \|G\|_{\infty,Lip} := \sup_{x\in E}\|G(x,\bdot)\|_{Lip}.
\end{align*}






    \begin{restatable}{theorem}{contraction}\label{thm:contraction}		
        Let $E\subset \R^d$ be compact and convex, and let $\mbf{A}$ be the attention kernel from Definition~\ref{defn:attention_kernel} with $G$ an interaction potential s.t. $G(x,y)\geq \epsilon(G) > 0$, $\|G\|_{Lip,\infty} <\infty$ and $\|G\|_{\infty,Lip} <\infty$. Then the 1-Wasserstein contraction coefficient $\tau(\mbf{A})$ of $\mbf{A}$ considered as a mapping $\calP(E)\to \calP(E)$ via $\mbf{A}:\mu \mapsto \mu \mbf{A}_\mu$ satisfies
		\[
			\tau(\mbf{A})\leq \tau(\Pi)\tau(\Psi_G)\tau(L)
		\]where ${\tau(\Psi_G) = \frac{2(\|G\|_{Lip,\infty} + \|G\|_{\infty,Lip})\diam(E)}{\epsilon(G)}}$ and $\tau(\Pi) = d$. Additionally, if $L(x,\dd y)=\delta_{\ell(x)}(\dd y)$, then $\tau(L) = \|\ell\|_{Lip}$.
    \end{restatable}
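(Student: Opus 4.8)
The plan is to read the target bound as a chain rule for the Wasserstein contraction coefficient $\tau$ applied to the factorization of the attention kernel displayed earlier,
\[
E \overset{x\mapsto\delta_x}{\longrightarrow}\calP_\delta(E)\overset{\Psi_G L}{\longrightarrow}\calP(E)\overset{\Pi}{\longrightarrow}\calP_\delta(E)\overset{\delta_x\mapsto x}{\longrightarrow}E.
\]
Since $\tau(\Phi)$ is by definition the Lipschitz constant of $\Phi$ as a map of metric spaces $(\calP_1(E),\W_1)$, and Lipschitz constants are submultiplicative under composition, a product bound $\tau(\mbf{A})\le\tau(\Pi)\,\tau(\Psi_G)\,\tau(L)$ is exactly what one expects from a factorization $\mbf{A}=\Pi\circ L\circ\Psi_G$. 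The one subtlety is that $\Pi$ is applied \emph{per query}, inside the integral against the outer copy of $\mu$, so rather than invoke a literal operator composition I would establish the product bound by running a single telescoping estimate directly on the nonlinear map $\mu\mapsto\mu\mbf{A}_\mu$, inside which the three factors appear through the three operations in the integrand. First I would dispatch the two easy factors.

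For the lookup, $L(x,\cdot)=\delta_{\ell(x)}$ makes $\nu\mapsto\nu L=\ell_\#\nu$ a pushforward by $\ell$; because $\W_1$ is defined by $1$-Lipschitz test functions and $f\circ\ell$ has Lipschitz constant $\|\ell\|_{Lip}\|f\|_{Lip}$, one gets $\W_1(\ell_\#\nu,\ell_\#\nu')\le\|\ell\|_{Lip}\,\W_1(\nu,\nu')$, i.e.\ $\tau(L)=\|\ell\|_{Lip}$. For the projection $\Pi(\rho)=\delta_{\overline{\rho}}$, I would bound the barycenter displacement coordinatewise: each coordinate map $x\mapsto x_j$ is $1$-Lipschitz, so $|\overline{\rho}_j-\overline{\rho'}_j|\le\W_1(\rho,\rho')$, and summing the $d$ coordinates gives $\|\overline{\rho}-\overline{\rho'}\|\le d\,\W_1(\rho,\rho')$, hence $\tau(\Pi)=d$. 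Convexity of $E$ (Assumption~\ref{assump:convex}) is what guarantees $\overline{\rho}\in E$ so that $\Pi$ is well defined.

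The real work, and the main obstacle, is the estimate $\tau(\Psi_G)\le 2(\|G\|_{Lip,\infty}+\|G\|_{\infty,Lip})\diam(E)/\epsilon(G)$ for the nonlinear, self-interacting softmatch $\mu\mapsto\mu\Psi_G(\mu)$. The difficulty is that $\mu$ enters in two roles: as the measure being transported and as the measure defining the Boltzmann--Gibbs reweighting kernel $\Psi_G(\mu)$. I would pass to the Kantorovich--Rubinstein dual and, for a test function $f\in Lip_1(E)$, write $h_\mu(x):=\Psi_{G(x,\bdot)}(\mu)(f)=\mu(fG(x,\bdot))/\mu(G(x,\bdot))$, then telescope
\[
\mu\Psi_G(\mu)(f)-\nu\Psi_G(\nu)(f)=\int h_\mu\,\dd(\mu-\nu)+\int (h_\mu-h_\nu)\,\dd\nu.
\]
The first term is at most $\|h_\mu\|_{Lip}\,\W_1(\mu,\nu)$, and the Lipschitz constant of $x\mapsto h_\mu(x)$ is controlled by the variation of $G$ in its \emph{first} slot, i.e.\ by $\|G\|_{Lip,\infty}$; the second term is at most $\sup_x|h_\mu(x)-h_\nu(x)|$, which is the sensitivity of a fixed Boltzmann--Gibbs transform to its base measure and is controlled by the variation of $G$ in its \emph{second} slot, i.e.\ by $\|G\|_{\infty,Lip}$. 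In both estimates the uniform lower bound $G\ge\epsilon(G)$ controls the normalizing denominators, and compactness (Assumption~\ref{assump:1}) enters through $\diam(E)$, which bounds the oscillation of the (without loss of generality centered) $1$-Lipschitz function $f$. Summing the two contributions yields precisely the claimed numerator $2(\|G\|_{Lip,\infty}+\|G\|_{\infty,Lip})\diam(E)$.

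Finally, I would run the same telescoping on the full map $\mu\mapsto\mu\mbf{A}_\mu$, now with integrand $f\big(\Psi_{G(q,\bdot)}(\mu)(\ell)\big)$: composition of $f$ with $\ell$ threads the factor $\|\ell\|_{Lip}=\tau(L)$ through the computation, the vector-valued barycenter contributes the coordinate-sum factor $d=\tau(\Pi)$, and the softmatch sensitivity supplies $\tau(\Psi_G)$, so the three assemble into the stated product. I expect the single most delicate routine step to be the quotient estimate showing that $\nu\mapsto\Psi_{G(x,\bdot)}(\nu)$ and $x\mapsto\Psi_{G(x,\bdot)}(\mu)$ are Lipschitz with the advertised constants; everything else is bookkeeping around the triangle inequality and the dual definition of $\W_1$.
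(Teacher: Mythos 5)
Your proposal is correct and follows essentially the same route as the paper: the paper likewise splits $\W_1(\mu\mbf{A}_\mu,\nu\mbf{A}_\nu)\leq \W_1(\mu\mbf{A}_\mu,\nu\mbf{A}_\mu)+\W_1(\nu\mbf{A}_\mu,\nu\mbf{A}_\nu)$ (your telescoping $\int h_\mu\,\dd(\mu-\nu)+\int(h_\mu-h_\nu)\,\dd\nu$ in dual form), controls the first term by the Lipschitz constant of the kernel in the query slot via $\|G\|_{Lip,\infty}$ and the second by the sensitivity in the measure slot via $\|G\|_{\infty,Lip}$, with $\epsilon(G)$ taming the denominators and the centering trick bounding $\|f\|_\infty$ by $\diam(E)$, and obtains $\tau(\Pi)=d$ coordinatewise and $\tau(L)=\|\ell\|_{Lip}$ by pushforward exactly as you do. The only difference is organizational: the paper packages these steps into standalone propositions (its Propositions~\ref{prop:psi_contract}--\ref{prop:lookup_contract} and Lemma~\ref{lem:tau}) rather than running a single dual-form estimate, which is immaterial.
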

	\begin{proof}
		See Appendix~\ref{app:stab_proofs}. 
	\end{proof}

\begin{corollary}
	Let $K=\{k_1,\dots,k_N\}\subset E \subset\R^d$ and $V=\{v_1,\dots,v_N\}\subset E\subset \R^d$ and the attention function $\attn(\bdot,K,V)$ be as in the original defintion of attention from \cite{bahdanau2014neural}, Definition~\ref{defn:attention}. Assume that the components of $\attn(\bdot,K,V)$ satisfy Theorem~\ref{thm:contraction}. Then the mapping
	\[
		q\mapsto \attn(q,K,V)
	\]is Lipschitz continuous as a mapping from $\R^d\to \R^d$ with the Euclidean distance, and moreover 
	\begin{align*}
		&\|\attn(q_1,K,V) - \attn(q_2,K,V)\|_2 
		\leq d^{3/2}\cdot \|\ell\|_{Lip}\cdot \frac{2\|G\|_{Lip,\infty}\diam(E)}{\epsilon(G)} \cdot \|q_1 - q_2\|_2
	\end{align*}
\end{corollary}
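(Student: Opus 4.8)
The plan is to exploit the fact that, with the keys $K$ and values $V$ held fixed, attention viewed as a function of the query is exactly the left-action of a \emph{fixed, linear} Markov kernel on a Dirac measure. Concretely, set $\nu := m(K)$; by Proposition~\ref{prop:measure_attn} and the bijection $\delta_x\leftrightarrow x$ we have $\delta_{q}\mbf{A}_{\nu} = \delta_{\attn(q,K,V)}$, and since the $1$-Wasserstein distance between two Dirac masses is the Euclidean distance between their atoms, $\|\attn(q_1,K,V)-\attn(q_2,K,V)\|_2 = \W_1(\delta_{q_1}\mbf{A}_\nu,\,\delta_{q_2}\mbf{A}_\nu)$. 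The point to stress is that this is \emph{not} the nonlinear map $\mu\mapsto\mu\mbf{A}_\mu$ of Theorem~\ref{thm:contraction}: here the kernel index $\nu$ is frozen and only the input query varies. This is what lets us drop the $\|G\|_{\infty,Lip}$ contribution that is present in $\tau(\Psi_G)$.

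First I would factor $\mbf{A}_\nu$ as in its definition, $\delta_{q_i}\mbf{A}_\nu = \Pi\big[\Psi_{G(q_i,\bdot)}(\nu)L\big]$, and chain the three operations using the same contraction-coefficient bookkeeping established for Theorem~\ref{thm:contraction}. Applying the projection estimate $\tau(\Pi)$ and then the lookup estimate $\tau(L)=\|\ell\|_{Lip}$ (which merely pushes both measures through the Lipschitz map $\ell$) reduces the problem to bounding $\W_1\big(\Psi_{G(q_1,\bdot)}(\nu),\Psi_{G(q_2,\bdot)}(\nu)\big)$, i.e.\ the sensitivity of the Boltzmann--Gibbs transformation to its potential when the base measure is held fixed.

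The heart of the argument is then the lemma: for fixed $\nu$, $\W_1\big(\Psi_{G(q_1,\bdot)}(\nu),\Psi_{G(q_2,\bdot)}(\nu)\big)\le \frac{2\|G\|_{Lip,\infty}\diam(E)}{\epsilon(G)}\,\|q_1-q_2\|_2$. Writing $g_i:=G(q_i,\bdot)$ and testing against $f\in Lip_1(E)$ centered so that $|f|\le\diam(E)$ on $E$, one uses the algebraic split (add and subtract $\nu(fg_2)/\nu(g_1)$) to write $\Psi_{g_1}(\nu)(f)-\Psi_{g_2}(\nu)(f) = \nu(f(g_1-g_2))/\nu(g_1) + \Psi_{g_2}(\nu)(f)\,\nu(g_2-g_1)/\nu(g_1)$. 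Each term is bounded using $\|g_1-g_2\|_\infty\le\|G\|_{Lip,\infty}\|q_1-q_2\|_2$, the uniform lower bound $\nu(g_i)\ge\epsilon(G)$, and $|f|,\,|\Psi_{g_2}(\nu)(f)|\le\diam(E)$; the two contributions add to produce the factor $2$. It is exactly because the base measure (the keys) is frozen that $G$ is never differentiated in its second argument, so only $\|G\|_{Lip,\infty}$ enters and $\|G\|_{\infty,Lip}$ drops out.

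Finally I would assemble the chain and collect the dimensional constants: those coming from the projection coefficient $\tau(\Pi)$ together with the passage between the Wasserstein metric on $\calP(E)$ and the Euclidean norm on $\R^d$ combine to give the stated prefactor $d^{3/2}\,\|\ell\|_{Lip}\,\frac{2\|G\|_{Lip,\infty}\diam(E)}{\epsilon(G)}$. The step I expect to be the main obstacle is the Boltzmann--Gibbs lemma: obtaining a clean bound that cleanly isolates the query dependence requires the correct split of the ratio and careful use of the lower bound $\epsilon(G)$; the secondary bookkeeping point is tracking the dimension factors through $\Pi$ and the Wasserstein/Euclidean identifications so that they combine to $d^{3/2}$.
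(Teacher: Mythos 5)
Your proposal is correct and follows essentially the same route as the paper: the paper's own proof likewise identifies $\|\attn(q_1,K,V)-\attn(q_2,K,V)\|_1$ with $\W_1(\delta_{q_1}\mbf{A}_{m(K)},\delta_{q_2}\mbf{A}_{m(K)})$, invokes the frozen-measure (query-sensitivity) half of the argument behind Theorem~\ref{thm:contraction} --- i.e.\ exactly your Boltzmann--Gibbs lemma, which is the first inequality of Proposition~\ref{prop:psi_contract} --- chained with $\tau(\Pi)=d$ and $\tau(L)=\|\ell\|_{Lip}$, and converts norms via $\|x\|_2\leq\|x\|_1\leq\sqrt{d}\|x\|_2$ to obtain the $d^{3/2}$ prefactor. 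One small slip: since the paper's $\W_1$ is built on the $\ell_1$ ground metric, the Wasserstein distance between Dirac masses is the $\ell_1$ (not Euclidean) distance between their atoms, but your final bookkeeping paragraph already absorbs this into the $\sqrt{d}$ conversion, so the estimate is unaffected.
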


\begin{proof}
	Using elements from the proof of Theorem~\ref{thm:contraction} in Appendix~\ref{app:stab_proofs}, we have:
	\begin{align*}
        \|\attn(q_1,K,V) - \attn(q_2,K,V)\|_1
		&=\W_1(\delta_{q_1}\mbf{A}_{m(K)},\delta_{q_2}\mbf{A}_{m(K)}) \\
		&\leq d\cdot \|\ell\|_{Lip}\cdot \frac{2\|G\|_{Lip,\infty}\diam(E)}{\epsilon(G)} \cdot \W_1(\delta_{q_1},\delta_{q_2}) \\
		&= d^{3/2}\cdot \|\ell\|_{Lip}\cdot \frac{2\|G\|_{Lip,\infty}\diam(E)}{\epsilon(G)} \cdot \|q_1 - q_2\|_2
	\end{align*}
	using $\|x\|_2\leq \|x\|_1\leq \sqrt{d}\|x\|_2$ and that $\|\ell\|_{Lip}=1$ for vanilla self-attention where $\ell(x) = x$.
\end{proof}

\subsection{Lipschitz Contractions: Unbounded Case}\label{subsec:unbounded}
The results of Section~\ref{subsec:bounded_stability} depend on the boundedness of the representation space $E$. While this is sufficient to provide rather general estimates on the Lipschitz coefficient for attention that are verified by reasonable choices for $G$ and $L$, it is natural to question if it is \emph{necessary}. As we will discuss below, the answer is affirmative, at least in full generality. \nnewline

In concurrent work by \cite{kim2020lipschitz}, the authors investigate Lipschitz constants for self-attention on $X=\{x_1,\dots,x_N\}$ as a mapping from $\R^{d\times N}\to \R^{d\times N}$ without assuming $E$ is bounded. They show that, for the case of $G(x,y)=\exp\ip{x}{y}$ on the whole of $\R^d$, attention is not Lipschitz by proving that the norm of the Jacobian is unbounded (\cite{kim2020lipschitz}~Theorem 3.1). 
The authors then show that using instead the interaction potential $G(x,y)=\exp(-\|x-y\|_2^2/\sqrt{d})$ leads to a Lipschitz bound independent of $\diam(E)$ (\cite{kim2020lipschitz}~Theorem 3.2). They also provide empirical evidence that this potential function does not severely degrade performance. \nnewline

We provide below an analysis of a similar Gaussian interaction potential $G(x,y)=\exp(-\|x-y\|_2^2)$ as in \cite{kim2020lipschitz}\footnote{We chose the un-parameterized potential for simplicity, we see no reason our framework would not extend to the parameterized case as well.} for unbounded $E=\R^d$. We are able to use a set of tools and approach similar to those from Section~\ref{subsec:bounded_stability} but exchange the boundedness assumption on $E$ for exponential decay of $G(x,y)$ and $\|\nabla G(x,y)\|$ as $\|x-y\|_2\to \infty$. The proofs are in Appendix~\ref{app:unbounded}.



\begin{restatable}{theorem}{tauAunbounded}\label{thm:tau_a_unbounded}
    Let $E$=$\R^d$ and suppose $X=\{x_1,\dots,x_N\},Y=\{y_1,\dots,y_M\}\subset\R^d$. Let $G(x,y)=\exp(-\|x-y\|^2_2)$ and $\Pi$ be the usual projection onto $\calP_\delta(\R^d)$. Then for $\mu=m(X)$ and $\nu=m(Y)$,
    \begin{align*}
        \W_1&(\mu\mbf{A}_\mu, \nu\mbf{A}_\nu)\leq 2\tau(\Pi)\tau(L) \left[\|G\|_{\infty} + \sqrt{d} +  \right. 
        2 + \left. \sqrt{d} \sqrt{\ln (\min(N,M)) +\frac{1}{2e}}\|G\|_{Lip}\right]\W_1(\mu,\nu).
    \end{align*}
\end{restatable}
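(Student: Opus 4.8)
The plan is to reuse the factorization philosophy behind Theorem~\ref{thm:contraction} --- peel off the projection $\Pi$ and the lookup $L$, reducing everything to the softmatch --- but to replace the bounded-domain estimate (which is vacuous here, since on $\R^d$ one has $\diam(E)=\infty$ and $\inf G=0$) by a sharper bound tailored to the Gaussian $G(x,y)=\exp(-\|x-y\|_2^2)$. Since $\mu=m(X)$ is empirical and $L(k,\dd v)=\delta_{\ell(k)}(\dd v)$, the output is the pushforward $\mu\mbf{A}_\mu=(T_\mu)_{\#}\mu$ of $\mu$ under the softmatch-weighted average $T_\mu(q):=\frac{\int G(q,k)\ell(k)\,\mu(\dd k)}{\int G(q,k)\,\mu(\dd k)}$, and likewise $\nu\mbf{A}_\nu=(T_\nu)_{\#}\nu$. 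The contraction coefficients $\tau(\Pi)=d$ and $\tau(L)=\|\ell\|_{Lip}$ from Theorem~\ref{thm:contraction} factor out immediately, so the whole difficulty is to control how $T$ moves as one varies the query $q$ and the base measure.

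First I would split the distance by the triangle inequality, i.e.\ by pushing an optimal $\W_1$-coupling of $\mu$ and $\nu$ through the pair $(T_\mu,T_\nu)$ and separating a term that perturbs only the base measure from one that perturbs only the query. This is what produces the leading factor $2$: each piece is to be bounded by $\tau(\Pi)\tau(L)$ times the same bracket times $\W_1(\mu,\nu)$. In both pieces the quantity left to estimate is the displacement of the Gaussian softmatch average, and the bounded-case argument fails precisely because the softmatch normalizer $\int G(q,k)\,\mu(\dd k)$ can be arbitrarily small for a query far from the support.

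The crux --- and the step I expect to be the main obstacle --- is to replace the divergent $\diam(E)/\epsilon(G)$ by a finite \emph{effective diameter}, using two Gaussian-specific facts. First, because this is self-attention, every query lies in the support of the key measure, so the normalizer never degenerates: $\int G(q,k)\,m(X)(\dd k)\geq \tfrac1N G(q,q)=\|G\|_\infty/N$, the Gaussian attaining its maximum on the diagonal. This furnishes the $\|G\|_\infty$ term and bounds each normalized weight by $w_j\leq G(q,x_j)=\exp(-\|q-x_j\|_2^2)$. Second, this exponential decay caps how far a materially-weighted key can sit from its query: a key at distance $r$ carries normalized weight at most $N\exp(-r^2)$, which drops below unity once $r\geq\sqrt{\ln N}$, so the keys that actually influence the average lie within an effective radius $\sqrt{\ln N+\tfrac1{2e}}$, the correction $\tfrac1{2e}=\max_r r^2\exp(-2r^2)$ arising from the Gaussian tail. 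Combining this effective radius with the gradient identity $\|\nabla_x G\|=2r\exp(-r^2)$ (with $r=\|x-y\|_2$; the constant $2$ is traceable to the factor in $\nabla\|x-y\|_2^2$) and passing between the $\ell^1$ and $\ell^2$ norms over the $d$ coordinates (the source of the $\sqrt d$, which also underlies $\tau(\Pi)$) assembles the bracket $\|G\|_\infty+\sqrt d+2+\sqrt d\,\sqrt{\ln(\min(N,M))+\tfrac1{2e}}\,\|G\|_{Lip}$; the appearance of $\min(N,M)$ reflects the freedom to run the effective-diameter estimate against whichever empirical measure has fewer support points.

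The genuinely delicate point is that the normalizer lower bound is available only when the query sits in the support, whereas a naive triangle split evaluates one softmatch map off its own support --- for instance $T_\mu$ at points of $Y$ --- where the weighted spread is truly unbounded. I would therefore organize the coupling and the telescoping so that each softmatch is only ever evaluated at its own support points, leaning essentially on the self-interaction (the query measure equals the key measure); making this rearrangement rigorous while preserving the $\sqrt{\ln(\cdot)}$ effective-diameter estimate, and doing so with the decay of both $G$ and $\nabla G$ (rather than mere boundedness of $E$) carrying the weight formerly borne by $\diam(E)$, is the heart of the argument.
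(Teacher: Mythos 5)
You have correctly assembled the Gaussian-specific ingredients (the on-support lower bound on the normalizer, the $\sqrt{\ln n + 1/(2e)}$ effective-radius estimate, the decay of $\nabla G$, the $\sqrt{d}$ from passing between $\ell_1$ and $\ell_2$), and you have even correctly diagnosed the obstruction: a triangle-inequality split in the style of Theorem~\ref{thm:contraction} produces a cross term such as $\nu\mbf{A}_\mu$ in which the softmatch with base measure $\mu$ is evaluated at queries from $Y$, off the support of $\mu$, exactly where the normalizer bound and the effective-radius estimate collapse. But your proposal never resolves this: the architecture you commit to (split into a ``perturb the measure only'' piece plus a ``perturb the query only'' piece, each contributing one factor of the bracket, whence your claimed origin of the leading $2$) is precisely the architecture that creates the off-support evaluation, and your stated fix --- ``organize the coupling and the telescoping so that each softmatch is only ever evaluated at its own support points'' --- is a desideratum, not a mechanism. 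Within any decomposition into single-argument perturbations, some intermediate object of the form $\Psi_{G(y,\bdot)}(\mu)$ with $y \notin \mathrm{supp}\,\mu$ is unavoidable, so the gap is genuine.

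The paper closes this gap by refusing to split at all. Its key estimate (Proposition~\ref{prop:psi_contract_unbounded}) perturbs the query and the measure \emph{simultaneously}: for $x \in \mathrm{supp}\,\mu$ and $y \in \mathrm{supp}\,\nu$ it bounds $\W_1(\Psi_{G(x,\bdot)}(\mu), \Psi_{G(y,\bdot)}(\nu))$ by the bracket times $\bigl(d(x,y) + \W_1(\mu,\nu)\bigr)$, so both softmatches are only ever evaluated on their own supports. The device that makes a joint estimate possible is to treat the pair (query, measure) as a single object via product measures: $|\mu(G_x) - \nu(G_y)| = |(\delta_x \otimes \mu)(G) - (\delta_y \otimes \nu)(G)| \leq \|G\|_{Lip}\, \W_1(\delta_x \otimes \mu, \delta_y \otimes \nu)$, combined with the tensorization inequality $\W_1(\delta_x \otimes \mu, \delta_y \otimes \nu) \leq d(x,y) + \W_1(\mu,\nu)$ (Lemma~\ref{lem:tensor}); an analogous treatment handles the numerator, where $\|Gf\|_{Lip}$ is controlled using the decay of $G$ and $\nabla G$. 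The theorem then follows because both outputs are empirical measures: writing $\W_1(\mu\mbf{A}_\mu, \nu\mbf{A}_\nu)$ as the discrete optimal-transport program over couplings $\gamma_{ij}$ of the supports, applying $\tau(\Pi)\tau(L)$ termwise, and substituting the joint bound gives $\sum_{ij}\gamma_{ij}\bigl(d(x_i,y_j) + \W_1(\mu,\nu)\bigr) = 2\,\W_1(\mu,\nu)$ for the optimal $\gamma$ --- so the factor $2$ comes from the coupling average of $d(x_i,y_j)$ equaling $\W_1(\mu,\nu)$, not from two triangle-inequality pieces as you assert; the $\min(N,M)$ then follows by symmetry of the whole argument in the roles of $X$ and $Y$. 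The product-measure/joint-perturbation idea is the missing heart of the proof.
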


Theorem~\ref{thm:tau_a_unbounded} provides an alternate path to the Lipschitz constant of self-attention compared to methods based on computing Jacobians~\citep{kim2020lipschitz}. In particular, Theorem~\ref{thm:tau_a_unbounded} applies to sequences of tokens of various lengths and allows for studying the effect of perturbing a sequence by e.g. removing a given word, or negating a sentence, which is out of immediate reach for Jacobian-based techniques. Finally, we can recover a bound for sequences of equal lengths:

\begin{restatable}{corollary}{unequal}\label{coro:unequal}
	Applying Theorem~\ref{thm:tau_a_unbounded} to the case of $N = M$ gives:
	\begin{align*}
		&\W_1(\mu \mbf{A}_\mu, \nu\mbf{A}_\nu)\leq 2d\tau(L)
		\left[\sqrt{d} \sqrt{\ln{N}+\frac{1}{2e}}\|G\|_{Lip} + \|G\|_{\infty} + \sqrt{d} + 2\right]\W_1(\mu,\nu).
	\end{align*}
\end{restatable}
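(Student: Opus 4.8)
The plan is to obtain this corollary by direct specialization of Theorem~\ref{thm:tau_a_unbounded}, since the equal-length case $N = M$ requires no new estimates beyond substituting known quantities. First I would note that setting $N = M$ collapses $\min(N,M)$ to $N$, so the logarithmic factor $\sqrt{\ln(\min(N,M)) + \frac{1}{2e}}$ appearing in the theorem simplifies to $\sqrt{\ln N + \frac{1}{2e}}$. Second, I would substitute the value of the measure-projection contraction coefficient, namely $\tau(\Pi) = d$, which was already established in Theorem~\ref{thm:contraction}; feeding this into the prefactor $2\tau(\Pi)\tau(L)$ of the theorem's bound produces $2d\tau(L)$. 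After reordering the four summands inside the bracket --- placing the Lipschitz term first --- the resulting expression matches the stated inequality verbatim.

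There is no genuine obstacle here, as the argument is a mechanical substitution; the only point that deserves a moment of care is confirming that $\tau(\Pi) = d$ remains valid in the unbounded regime $E = \R^d$ treated by Theorem~\ref{thm:tau_a_unbounded}, whereas Theorem~\ref{thm:contraction} derived this value under the compactness Assumption~\ref{assump:1}. This is immediate: the projection $\Pi[\mu] = \delta_{\ovl{\mu}}$ with $\ovl{\mu} = \int x\,\mu(\dd x)$ contracts $\W_1$ by a factor controlled purely by the ambient dimension $d$ (through the equivalence of the $\ell_1$ and $\ell_2$ norms used throughout the development), and this estimate never invokes boundedness of $E$. Hence no additional hypotheses are needed, and the corollary follows by substitution into the bound of Theorem~\ref{thm:tau_a_unbounded}.
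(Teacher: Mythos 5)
Your proposal is correct and matches the paper's (implicit) proof of this corollary: one sets $N=M$ in Theorem~\ref{thm:tau_a_unbounded} so that $\min(N,M)=N$, and substitutes $\tau(\Pi)\leq d$, which comes from Proposition~\ref{prop:pi_contract} --- a statement proved for arbitrary $\mu,\nu\in\calP_1(E)$ via the $1$-Lipschitz coordinate projections, with no use of compactness of $E$. Your explicit check that this projection bound survives in the unbounded regime $E=\R^d$ is exactly the right point of care, and it goes through for the reason you give.
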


\paragraph{Optimality of Lipschitz Estimates.}
First, let us consider the $\calO(\diam(E)/\veps(G))$ dependence in Theorem~\ref{thm:contraction} in the case of bounded $E$ (recall $\veps(G):=\inf_{x\in E} G(x)$). While in practice these values may lead to large bounds, we do not believe they indicate obvious inefficiencies in our technique. Indeed, we cannot simultaneously relax the finiteness of $\diam(E)$ and $\veps(G)$ in the general case: dot-product attention is a non-pathological counterexample \citep{kim2020lipschitz}. We believe it is likely than one cannot relax $\diam(E)<\infty$ in the general case either, but we will study this in future work. 

Second, for a \emph{trained} attention network, $\diam(E)<\infty$ and $\veps(G)>0$ are automatically satisfied, so these estimates can be used to study the very common use-case of pre-trained models. A potentially useful consequence of these estimates is an easy ``knob'' to control the regularity of an attention model by controlling $\diam(E)$ (e.g. by projecting on a ball of fixed radius).

Finally, the appearance of an additional factor of $\sqrt{d}$ is the cost we pay for using $\W_1$, which relies on the $\ell_1$ metric in $\R^d$, to provide $\ell_2$- Lipschitz bounds. This is likely not optimal; it may be possible to derive a similar result with the 2-Wasserstein which would likely enjoy the good properties of the Wasserstein distance without the penalty of $\sqrt{d}$ (since $\|x-y\|_2=\W_2(\delta_x,\delta_y)$) but it will not use the Lipschitz duality we have exploited in this paper which is specific to $\W_1$.


\section{Applications of Regularity}\label{sec:applications}
In this section, we will apply the analysis developed above to discuss some consequences of regularity. Firstly, we will show that a common use of attention (called ``cross attention'') is also (Lipschitz) continuous w.r.t. the input keys. We then highlight cases where regularity either helps or hurts performance on various tasks. Finally, we discuss the implications of regularity on the invertibility of self-attention networks, and the case of infinitely deep, weight-tied self-attention networks.

\subsection{Cross Attention is Continuous w.r.t. Keys}\label{subsec:cross_attn}

Although we have been primarily interested in the question of self-attention so far, the tools we have developed also apply to other uses of attention. One common example is \defn{cross-attention}, i.e. when the keys and values are the same, but the queries can be different $ q,X\mapsto \attn(q,X,X)$. This is used in practice when one wants to construct a context-specific representation of $q$ in the same ``semantic space'' as $X$ (hence $X$ provides the values). Most notably, this is used in in the seqence2sequence (or encoder-decoder) architecture \citep{sutskever2014sequence}, where $X$ represents the encoded sequence and $q$ represents the current element being decoded, see e.g. \cite{bahdanau2014neural, vaswani2017attention}.

Our framework shows that the resulting representation is Lipschitz continuous w.r.t. the output semantic space $X$. Note that this result highlights the flexibility of our results: two input spaces $X,Y$ need not even have the same length!

\begin{proposition}\label{prop:cross_attn_cont}
    Suppose that $q\in \R^{d_q}$ $X:=\{x_1,\dots,x_N\}\subset\R^{d_k}$ and $Y:=\{y_1,\dots,y_{N'}\}\subset\R^{d_k}$ are sets of vectors for $N,N'\in \N$, and suppose that the assumptions of Theorem~\ref{thm:contraction} hold. Then
    \begin{align*}
        &\|\attn(q,X,X) - \attn(q,Y,Y)\|_2
        \leq  d\cdot \tau(L)\frac{2\|G(q,\bdot)\|_{Lip}\diam(E)}{\veps(G)}\cdot \W_1(m(X),m(Y))
    \end{align*}
\end{proposition}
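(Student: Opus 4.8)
The plan is to transport the statement into the measure-theoretic picture, where cross-attention is just the map $\mu \mapsto \delta_q\mbf{A}_\mu$ evaluated at the two empirical measures $\mu = m(X)$ and $\nu = m(Y)$, and then to bound the resulting Wasserstein distance by factoring $\mbf{A}$ through its constituent operations exactly as in Theorem~\ref{thm:contraction}. The crucial simplification compared to self-attention is that the query $\delta_q$ is held \emph{fixed}: only the base measure of the softmatch varies, so the query-perturbation quantity $\|G\|_{Lip,\infty}$ never enters and we are left with the single fixed potential $g := G(q,\bdot)$, whose relevant Lipschitz seminorm is $\|G(q,\bdot)\|_{Lip}$.

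First I would invoke Proposition~\ref{prop:measure_attn} with $K=V=X$ to identify $\attn(q,X,X)$ with the point supporting the Dirac measure $\delta_q\mbf{A}_{m(X)} = \Pi[\Psi_{G(q,\bdot)}(m(X))L]$, and likewise for $Y$. Because both outputs are Dirac measures, the $\W_1$ distance between them is exactly the ground ($\ell_1$) distance between the supporting points, so that
\[
    \|\attn(q,X,X) - \attn(q,Y,Y)\|_2 \leq \|\attn(q,X,X) - \attn(q,Y,Y)\|_1 = \W_1\big(\delta_q\mbf{A}_{m(X)},\, \delta_q\mbf{A}_{m(Y)}\big),
\]
using $\|\cdot\|_2 \leq \|\cdot\|_1$. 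This reduces the claim to a Wasserstein contraction estimate for the map $\mu \mapsto \delta_q\mbf{A}_\mu$, and conveniently avoids any extra $\sqrt{d}$ factor because we pass from the smaller $\ell_2$ norm to the larger $\ell_1$ norm rather than the other way around. Note that $N \neq N'$ is harmless here, since $\W_1(m(X),m(Y))$ is defined regardless of cardinalities.

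Next I would bound the right-hand side by composing the three operations making up $\mbf{A}$. Writing $\mu \mapsto \delta_q\mbf{A}_\mu = \Pi[\Psi_{G(q,\bdot)}(\mu)L]$ as the composition of the fixed-potential Boltzmann--Gibbs map $\mu \mapsto \Psi_{G(q,\bdot)}(\mu)$, the linear right-action of the lookup kernel $L$, and the projection $\Pi$, and using sub-multiplicativity of Wasserstein Lipschitz constants under composition, I obtain
\[
    \W_1\big(\delta_q\mbf{A}_{m(X)},\, \delta_q\mbf{A}_{m(Y)}\big) \leq \tau(\Pi)\,\tau(L)\,\W_1\big(\Psi_{G(q,\bdot)}(m(X)),\, \Psi_{G(q,\bdot)}(m(Y))\big),
\]
where $\tau(\Pi)=d$ and $\tau(L)=\|\ell\|_{Lip}$ are precisely the constants established in Theorem~\ref{thm:contraction}. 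It then remains to control the Boltzmann--Gibbs factor for the fixed potential $g=G(q,\bdot)$, which is the only genuinely analytic ingredient.

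For that factor I would reuse the estimate from the proof of Theorem~\ref{thm:contraction}, specialized to a single fixed potential. Using the dual form of $\W_1$, for $f\in Lip_1(E)$ and $\bar f := \Psi_g(\nu)(f)$ one has the exact identity $\Psi_g(\mu)(f)-\Psi_g(\nu)(f) = (\mu-\nu)\big(g(f-\bar f)\big)/\mu(g)$, since $\nu\big(g(f-\bar f)\big)=0$ by definition of $\bar f$. Bounding $|(\mu-\nu)(g(f-\bar f))| \leq \|g(f-\bar f)\|_{Lip}\,\W_1(\mu,\nu)$, controlling the product's Lipschitz seminorm through the oscillation $\|f-\bar f\|_\infty \leq \diam(E)$ and $\|f-\bar f\|_{Lip}=1$, and using $\mu(g)\geq\veps(G)$ together with $\sup_E g \leq \inf_E g + \|g\|_{Lip}\diam(E)$, delivers the contraction constant $2\|G(q,\bdot)\|_{Lip}\diam(E)/\veps(G)$, i.e. the measure-perturbation half of Theorem~\ref{thm:contraction}'s coefficient. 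Substituting this together with $\tau(\Pi)=d$ yields exactly the stated bound. The main obstacle is precisely this single-potential Boltzmann--Gibbs estimate: the delicate point is converting the estimate's a priori dependence on $\sup_E g$ into the clean $\diam(E)/\veps(G)$ form via the Lipschitz oscillation of $g$; once that constant is in hand, the remaining composition is direct bookkeeping inherited from Theorem~\ref{thm:contraction}.
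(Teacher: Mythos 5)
Your reduction and factorization are exactly the paper's: the paper likewise passes from $\|\cdot\|_2$ to $\|\cdot\|_1$, identifies the difference with $\W_1(\delta_q\mbf{A}_{m(X)},\delta_q\mbf{A}_{m(Y)})$, pulls out $\tau(\Pi)\tau(L)=d\cdot\tau(L)$ via Lemma~\ref{lem:tau}, and then invokes the measure-perturbation half of Proposition~\ref{prop:psi_contract}. Where you diverge is in how that single-potential Boltzmann--Gibbs estimate is proved, and that is where the gap sits. Your centering identity $\Psi_g(\mu)(f)-\Psi_g(\nu)(f)=(\mu-\nu)\bigl(g(f-\bar f)\bigr)/\mu(g)$ with $\bar f:=\Psi_g(\nu)(f)$ is correct (indeed $\nu\bigl(g(f-\bar f)\bigr)=0$), and Kantorovich duality then gives a bound through $\|g(f-\bar f)\|_{Lip}\leq \|g\|_{Lip}\|f-\bar f\|_\infty+\|g\|_\infty\|f-\bar f\|_{Lip}\leq \|g\|_{Lip}\diam(E)+\|g\|_\infty$. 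But your final step --- absorbing the $\|g\|_\infty$ term into the clean form via $\sup_E g\leq \inf_E g+\|g\|_{Lip}\diam(E)$ --- does not close: after dividing by $\mu(g)\geq \inf_E g$, the leftover ratio $\inf_E g/\mu(g)$ can only be bounded by $1$, not $0$. So your method proves the inequality with constant $d\cdot\tau(L)\bigl(2\|G(q,\bdot)\|_{Lip}\diam(E)/\veps(G)+1\bigr)$, not the stated one.

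This residual $+1$ is not a removable defect of your route; the stated constant is unattainable in general. Take $G(q,\bdot)$ constant (admissible under the hypotheses of Theorem~\ref{thm:contraction}, e.g. $a\equiv 0$): then $\Psi_{G(q,\bdot)}$ is the identity map, whose Lipschitz constant is $1$, while the claimed coefficient $2\|G(q,\bdot)\|_{Lip}\diam(E)/\veps(G)$ vanishes --- concretely, $\attn(q,X,X)$ and $\attn(q,Y,Y)$ become the two (generally distinct) value averages while the stated right-hand side is $0$. Your additive $+1$ is exactly tight in this example. The paper's own proof reaches the clean constant only through an invalid manipulation inside Proposition~\ref{prop:psi_contract}: it bounds $|(\mu-\nu)(Gf)|\leq \|f\|_\infty\,|(\mu-\nu)(G)|$, which fails for the signed measure $\mu-\nu$ (take $\mu=\delta_a$, $\nu=\delta_b$, $G\equiv 1$, $f(a)=1=-f(b)$: this would assert $2\leq 0$). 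In short: your argument is structurally the same as the paper's, is more careful than the paper's at the one genuinely analytic step, but it proves --- and can only prove --- the bound with the extra additive term $d\cdot\tau(L)$; the constant in the proposition as stated is too small, and your claim of recovering it exactly is the one step of your proposal that cannot be carried out.
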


\begin{proof}
    We can adapt an argument from the proof of Theorem~\ref{thm:contraction}. Firstly, for simplicity write $\mu:=m(X),\nu:=m(Y)$ and note that
    \begin{align*}
        \|\attn(q,X,X) - \attn(q,Y,Y)\|_2
        &\leq \|\attn(q,X,X) - \attn(q,Y,Y)\|_1 \\
        &= \W_1(\delta_{q}\mbf{A}_{\mu}, \delta_{q}\mbf{A}_{\nu}).
    \end{align*}
    Then by Proposition~\ref{prop:psi_contract}
    \begin{align*}
        &\hspace{-0.3cm}\W_1(\delta_{q}\mbf{A}_{\mu}, \delta_{q}\mbf{A}_{\nu}) = \W_1(\delta_{q}\Pi[\Psi_{G(\bdot,\bdot)}(\mu)L], \delta_{q}\Pi[\Psi_{G(\bdot,\bdot)}(\nu)L]) \\
        & = \W_1(\Pi[\Psi_{G(q,\bdot)}(\mu)L], \Pi[\Psi_{G(q,\bdot)}(\nu)L])
    \leq \tau_1(\Pi) \tau_1(L) \W_1(\Psi_{G(q,\bdot)}(\mu), \Psi_{G(q,\bdot)}(\nu)) \\ &\leq \tau_1(\Pi) \tau_1(L) \frac{2\|G(q,\bdot)\|_{Lip}\diam(E)}{\veps(G)}\W_1(\mu,\nu)
    = d\cdot \tau_1(L)\frac{2\|G(q,\bdot)\|_{Lip}\diam(E)}{\veps(G)}\W_1(\mu,\nu).
    \end{align*}
\end{proof}

In the case that $|X|=|Y|=N$, we can obtain an explicit formula for $\W_1(m(X),m(Y))$ (see e.g. \cite{bobkov2014one},~Lemma 4.2):
\[
    \W_1(m(X),m(Y)) = \inf_{\sigma\in \Sigma(N)} \frac{1}{N}\sum_{i=1}^N \|x_s - y_{\sigma(s)}\|_1
\] 
where $x_s \in X$, $y_s\in Y$ and $\Sigma(m)$ is the set of permutations on $m$ elements.

\subsection{Robustness and Perturbations}
\paragraph{Robustness to noisy inputs.}
One effect of the smoothness of attention is that the representations it produces are ``robust to errors'' to a certain degree. For instance, in the encoder-decoder setup mentioned above, if the outputs of an encoder are incorrect or noisy, an attention-based decoder still has a chance of performing adequately. 

This robustness has been used recently in \cite{anderson2020superbloom} to operate self-attention transformer models on reduced-size vocabularies by hashing, where the model must be robust to hash collisions of the larger original vocabulary. The authors of that paper compare this robustness to error correcting output codes \citep{Berger99error-correctingoutput, dietterich1994solving}. Our framework provides a potential mathematical basis for this phenomenon in transformers.

\paragraph{Negated Sentences.} This robustness is not always desirable, however. Indeed, our regularity results may also explain some recent observations on the behavior of deep language models with respect to negation. Table 4 of \cite{kassner2019negated} shows that negated sentences are often given identical predictions to the original ones: for instance, both ``A beagle is a type of [MASK]'' and ``A beagle is \textbf{not} a type of [MASK]'' get a prediction of ``dog''. \nnewline

One hypothesis for why this occurs is a ``regularity mismatch'' between the input space and the output space of the model. On one hand, negation is a type of perturbation in ``token space'' that drastically changes the semantic content of the sentence, i.e. it is highly irregular. On the other hand, our analysis --- specifically, Prop.~\ref{prop:cross_attn_cont} --- suggests that the resulting embeddings will not change ``too much'' in response to this perturbation. If the embeddings are close with and without negation, i.e. the model is ``too smooth'' w.r.t. perturbations in token space, the scoring network (often a linear classifier) will not be able to distinguish between the resulting embeddings and the model will fail.

Our modelling could potentially be used to derive predictions of the distance between a self-attention networks' contextual embeddings \emph{as a function of the context} (e.g. for sentences with and without a ``not'') to test this hypothesis. Moreover, it could even potentially be used to design better model components (e.g. input embedding spaces) that reduce this ``regularity mismatch'' for specific perturbations that are highly irregular. We leave that research direction to future work.

\subsection{Invertible \& Infinite Depth Transformers}
Finally, let us briefly mention two important consequences of the Lipschitz regularity of attention: invertibility (also studied empirically in \cite{kim2020lipschitz}) and infinite-depth attention networks. 

\paragraph{Invertibility.}
Firstly, as noted in \cite{behrmann2019invertible}, a sufficient condition for invertibility of a residual network of the form $F(x) = F_L\circ\cdots \circ F_1(x)$ where each residual block $F_\ell$ has the form 
\[
    F_\ell(x) = x + g_\ell(x)
\]is the Lipschitz condition $\|g_\ell\|_{Lip}<1$ for $\ell= 1,\dots,L$. The self-attention Transformer from \cite{vaswani2017attention} uses self-attention exactly this way, where $g_\ell(X) = \selfattn(X)$ (it also uses a feedforward residual block). Therefore, our results provide sufficient conditions for a deep self-attention transformer to be invertible. Note that this general conclusion was also used in \cite{kim2020lipschitz}. Moreover, our analysis could be applied to the scaled dot product potential function \citep{vaswani2017attention} by enforcing that the input representations come from a bounded subset of $\R^d$. This is in contrast with the work of \cite{kim2020lipschitz}, whose Lipschitz constants only apply to the Gaussian interaction potential. 

\paragraph{Infinitely-Deep Attention Models.}
In the opposite direction of invertibility, infinitely-deep models have recently been studied in the context of ``deep equilibrium models'' \citep{bai2019deep}. The authors study representations defined as fixed points
\begin{equation}\label{eq:deq}
    H^* = f_\theta(H^*;X)
\end{equation}
where $f_\theta$ is an \emph{input-injected} nonlinear function and $H^*=\{h^*_1,\dots,h^*_N\}$ is a collection of hidden representations for the inputs $X=\{x_1,\dots,x_N\}$. Here \emph{input-injected} means $f_\theta$ includes a (possibly parameterized) skip connection $s_\theta$ from the inputs to the hidden representations of the form
\[
    f_\theta(H;X) = g_{\theta_1}(H + s_{\theta_2}(X)).
\]Note that the Banach Fixed Point Theorem provides a sufficient condition for the existence of $H^*$: the mapping ${H\mapsto f_\theta(H;X)}$ has Lipschitz constant $<1$.

In \cite{bai2019deep}, the authors note that the model in \eqref{eq:deq} includes the Universal Transformer model \citep{dehghani2018universal}, albeit with the minor modification of including an ``input injection'' connection. In this situation, $f_\theta$ is self-attention so we can apply our our theory to obtain sufficient conditions on the existence of $H^*$ from Theorem~\ref{thm:contraction} or Theorem~\ref{thm:tau_a_unbounded} depending on the type of attention used. 
We didn't find an existence result such as this in \cite{bai2019deep}.

In light of our results, we understand why the input injection is important: it produces a \emph{data-dependent} fixed point. If \eqref{eq:deq} had no the skip-connection (and no way to parameterize $f_\theta$ in-terms of $X$), the fixed point $H^*$ would not depend on the inputs and therefore be of questionable usefulness.

\section{Conclusion}

In this paper, we have studied the regularity of attention. In particular, we have shown that attention is Lipschitz continuous under various assumptions, and provided estimates of the Lipschitz constant. To do so, we have introduced an alternate, but equivalent, modelling paradigm for attention based on measure theory and integral operators. We then assessed the impact of these regularity results on study practical applications of attention, including cross-attention; robustness and token-level perturbations in NLP; and sophisticated extensions to the transformer architecture.


\acks{This work was partially supported by NSERC through an Alexander Graham Bell Canada Graduate Scholarship (CGS D) award.}

\clearpage

\bibliography{math_attentionV2}

\appendix

\newpage

\section{The Transformer}\label{app:transformer}


In this section, we show how to extend the measure-theoretic model of self-attention described in the main text to the full Transformer encoder architecture \citep{vaswani2017attention}\footnote{Technically, the Transformer also contains layer normalization and residual connections, which we do not treat here.}. This is a straightforward application of the techniques from the main text. For our purpose, we work with the model
\begin{align}\label{eq:transformer2}
    	\transf(X) = \ffn \circ \multiselfattn(X),
\end{align}
where $X=\{x_1,\dots,x_N\}\subset \R^d$ and $\ffn$ represents a feedforward neural network. To incorporate this into our formalism above, first set $\wtilde{a}(x,y)=x^Ty/\sqrt{d}$. We can model a single head of the Transformer using the attention kernel from Definition~\ref{defn:attention_kernel} with:
\[
	a(x,y) = \wtilde{a}\left(W^Qx, W^Ky\right),~~~~L(k,\dd v)=\delta_{W^Vk}(\dd v),
\]where $W^Q,W^K,W^V$ are matrices in $\R^{d'\times d}$ where $d'$ can possibly be a different dimension than $d$. To model multi-headed attention, we note that multi-headedness amounts to processing independent copies of the data $X$ and combining them with concatenation and matrix multiplication. The ``concat-and-matmult'' operation can be written as 
\[
	\begin{bmatrix}
	 	x_i^1 & \cdots & x_i^H
	 \end{bmatrix}\begin{bmatrix}
	 	W^O_1 \\ \vdots \\ W^O_H
	 \end{bmatrix} = x_i^1W^O_1 + \cdots + x_i^HW^O_H,
\] 
where each $W^O_h\in \R^{d'\times d}$. Hence, letting $\mbf{O}^h(x,\dd y):=\delta_{xW^O_h\cdot H}(\dd y)$, where we have multiplied by the scalar $H$, and introducing the mixture kernel
\[
	\wat{\mbf{M}} := \frac{1}{H}\sum_{h=1}^H \mbf{A}^h\mbf{O}^h,
\]where each $h$ parameterizes its own collection of projection matrices and attention head $\mbf{A}^h$, we can define the multi-headed attention attention kernel as
\[
	\mbf{M}:=\Pi\circ\wat{\mbf{M}},~~~\mbf{M}_\mu(x,\dd y) = \Pi(\wat{\mbf{M}}_\mu(x,\bdot))(\dd y).
\]Finally, letting $f:E\to E$ be the FFN in~\ref{eq:transformer2} and defining the FFN kernel as $\mbf{F}(x,\dd y)=\delta_{f(x)}(\dd y)$, we see that $\mbf{T} := \mbf{M}\mbf{F}$ implements the self-attention transformer as nonlinear measure transport.

\begin{proposition}\label{prop:measure_transformer}
	Let $X=\{x_1,\dots,x_N\}\subset\R^d$ be a collection of inputs. The nonlinear Markov transport equation $\delta_{x_i}\mapsto \delta_{x_i}\mbf{T}_{m(X)}$ implements the self-attention Transformer.
\end{proposition}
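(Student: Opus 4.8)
The plan is to mirror the structure of the proof of Proposition~\ref{prop:measure_attn}, lifting the single-head consistency result to the full multi-headed Transformer by tracking the action of each kernel on a Dirac measure $\delta_{x_i}$ and invoking the bijection $\delta_x\leftrightarrow x$ only at the very end. The factorization $\mbf{T}=\mbf{M}\mbf{F}$ with $\mbf{M}=\Pi\circ\wat{\mbf{M}}$ and $\wat{\mbf{M}}=\frac{1}{H}\sum_{h=1}^H \mbf{A}^h\mbf{O}^h$ suggests treating the heads, their mixing, the projection, and the feedforward block one at a time, composing kernels by integration as described after Definition~\ref{defn:markov_kernel}.

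First I would fix $\mu:=m(X)$ and recall that Proposition~\ref{prop:measure_attn}, applied with the head-specific potential $a_h(x,y)=\wtilde{a}(W^Q_h x, W^K_h y)$ and lookup $L_h(k,\dd v)=\delta_{W^V_h k}(\dd v)$, already gives $\delta_{x_i}\mbf{A}^h_\mu(\dd v)=\delta_{x_i^h}(\dd v)$, where $x_i^h$ is the output of head $h$ on token $i$. Composing by integration with the output kernel $\mbf{O}^h(x,\dd y)=\delta_{xW^O_h\cdot H}(\dd y)$ collapses to $\delta_{x_i}\mbf{A}^h_\mu\mbf{O}^h(\dd y)=\delta_{x_i^hW^O_h\cdot H}(\dd y)$, and summing over heads with the $\frac{1}{H}$ weight yields the uniform mixture of Diracs $\delta_{x_i}\wat{\mbf{M}}_\mu(\dd y)=\frac{1}{H}\sum_{h=1}^H\delta_{x_i^hW^O_h\cdot H}(\dd y)$.

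Next I would apply the measure projection. Since $\Pi$ sends a measure to the Dirac at its barycenter, and the barycenter of the mixture above is $\frac{1}{H}\sum_h (x_i^hW^O_h\cdot H)=\sum_h x_i^hW^O_h$, we obtain $\delta_{x_i}\mbf{M}_\mu(\dd y)=\delta_{\sum_h x_i^hW^O_h}(\dd y)$, which is exactly the concat-and-matmul multi-head output. Finally, composing with $\mbf{F}(x,\dd y)=\delta_{f(x)}(\dd y)$ applies the feedforward map, so $\delta_{x_i}\mbf{T}_\mu(\dd y)=\delta_{f(\sum_h x_i^hW^O_h)}(\dd y)$; reading this through $\delta_x\leftrightarrow x$ recovers $\transf(x_i)=\ffn(\multiselfattn(X))_i$, as claimed.

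The only genuinely delicate point is the scalar bookkeeping: the factor of $H$ inserted into $\mbf{O}^h$ is precisely what cancels the $\frac{1}{H}$ averaging weight once $\Pi$ computes the barycenter, turning what would naively be an average over heads into the required sum. I would verify this cancellation explicitly, since it is the one place where the argument is not a verbatim repeat of the single-head case, and I would also confirm that $\mbf{F}$ carries no dependence on $\mu$, so that the composition $\mbf{T}_\mu=\mbf{M}_\mu\mbf{F}$ is well-defined. Everything else reduces to routine integration against Dirac measures.
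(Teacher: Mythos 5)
Your proposal is correct and follows essentially the same route as the paper's proof: both track the Dirac measure $\delta_{x_i}$ through each head via Proposition~\ref{prop:measure_attn}, compose with the output kernels $\mbf{O}^h$ to get the mixture $\frac{1}{H}\sum_h \delta_{x_i^h W^O_h\cdot H}$, apply $\Pi$ so that the factor $H$ cancels the $\frac{1}{H}$ averaging and the barycenter recovers the concat-and-matmul output, and finish by composing with $\mbf{F}$. The cancellation you flag as the delicate point is exactly the step the paper's computation makes explicit.
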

\begin{proof}
	Given the discussion about standard attention, the only new element to be checked is the multi-headed attention kernel. Consider a fixed $X$, then
	\[
		m(X)\mbf{M}_{m(X)}(\dd y) = \frac{1}{N}\sum_{i=1}^N \int \delta_{x_i}\mbf{M}_{m(X)}(x,\dd y) =  \frac{1}{N}\sum_{i=1}^N \mbf{M}_{m(X)}(x_i,\dd y)
	\]Hence considering a single $x_i$, we see that
	\[
		\mbf{M}_{m(X)}(x_i, \dd y) = \Pi\left(\wat{\mbf{M}}_{m(X)}(x_i,\bdot)\right)(\dd y).
	\]The inner kernel is
	\[
		\wat{\mbf{M}}_{m(X)}(x_i,\dd y) = \frac{1}{H}\sum_{h=1}^H \int\mbf{A}_{m(X)}^h(x_i,\dd z)\mbf{O}^h(z,\dd y) = \frac{1}{H}\sum_{h=1}^H \int\mbf{A}_{m(X)}^h(x_i,\dd z)\delta_{zW^O_h\cdot H}(\dd y).
	\]The measure $\mbf{A}_{m(X)}^h(x_i,\dd z)$ is a delta-measure concentrated on the point
	\[
		\sum_{j=1}^N \frac{\exp[\wtilde{a}(W_h^Qx_i,W_h^Kx_j]}{\sum_{p=1}^N\exp[\wtilde{a}(W_h^Qx_i,W_h^Kx_k)]} W_h^Vx_j = \multiselfattn(x_i,X,X)_h =: y^h_i
	\]hence
	\[
		\frac{1}{H}\sum_{h=1}^H \int\mbf{A}_{m(X)}^h(x_i,\dd z)\delta_{zW^O_h\cdot H}(\dd y) = \frac{1}{H}\sum_{h=1}^H \int\delta_{y^h_i}(\dd z)\delta_{zW^O_h\cdot H}(\dd y) = \frac{1}{H}\sum_{h=1}^H \delta_{y^h_iW^O_h\cdot H}(\dd y).
	\]Finally, applying the mapping $\Pi$ we get a measure that is concentrated on the point
	\begin{align*}
	    \int_E \frac{1}{H}\sum_{h=1}^H \delta_{y^h_iW^O_h\cdot H}(\dd y)y &= \frac{1}{H}\sum_{h=1}^H	y^h_iW^O_h\cdot H = \begin{bmatrix}
	 	y_i^1 & \cdots & y_i^H
	 \end{bmatrix}\begin{bmatrix}
	 	W^O_1 \\ \vdots \\ W^O_H
	 \end{bmatrix} \\
	 &= \multiselfattn(x_i,X,X),
	\end{align*}
	which concludes the proof.
\end{proof}

\section{Proofs From Section~\ref{subsec:bounded_stability}}\label{app:stab_proofs}

\begin{proposition}\label{prop:psi_contract}
	Suppose $\mu,\nu\in \calP_1(E)$ and $G:E\times E\to \R$,  $G(x,y)\geq \epsilon(G) > 0$ is an interaction potential s.t. $\|G\|_{\infty,Lip} < \infty$ and $\|G\|_{Lip,\infty} <\infty$. Then, $\forall x,y \in E$:
    \begin{align*}
		&\W_1(\Psi_{G(x,\bdot)}(\mu), \Psi_{G(y,\bdot)}(\mu)) \leq 2 \frac{\|G\|_{Lip,\infty}\diam(E)}{\epsilon(G)} \cdot d(x,y), \\
		&\W_1(\Psi_{G(x,\bdot)}(\mu),\Psi_{G(x,\bdot)}(\nu)) \leq \frac{2\|G\|_{\infty,Lip}\diam(E)}{\epsilon(G)}\cdot \W_1(\mu,\nu).
    \end{align*}
\end{proposition}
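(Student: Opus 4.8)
The plan is to work entirely through the Kantorovich--Rubinstein dual formula $\W_1(\alpha,\beta) = \sup_{f \in Lip_1(E)} |\alpha(f) - \beta(f)|$ used to define $\W_1$ in the text. So I fix an arbitrary $1$-Lipschitz test function $f$, bound $|\Psi_{G(x,\bdot)}(\mu)(f) - \Psi_{G(y,\bdot)}(\mu)(f)|$ (resp.\ the $\mu$-vs-$\nu$ version) by the claimed quantity uniformly in $f$, and take the supremum only at the end. The single algebraic device driving both estimates is this: since $\Psi_{G(x,\bdot)}(\alpha)(f) = \alpha(f\,G(x,\bdot))/\alpha(G(x,\bdot))$ is an average against a probability measure, subtracting a constant from $f$ leaves the difference of two such averages unchanged. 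I will subtract the constant $b$ equal to the $\Psi$-mean of $f$ under the \emph{second} term, chosen precisely so that one of the two numerators vanishes.

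For the first inequality (perturbing the first argument of $G$, $\mu$ fixed), set $b := \Psi_{G(y,\bdot)}(\mu)(f)$, so that $\mu\big((f-b)G(y,\bdot)\big)=0$. Writing $G_x := G(x,\bdot)$, this collapses the difference to
\[
    \Psi_{G_x}(\mu)(f) - \Psi_{G_y}(\mu)(f) = \frac{\mu\big((f-b)(G_x - G_y)\big)}{\mu(G_x)}.
\]
I then bound the numerator pointwise: $\|f-b\|_\infty \le \diam(E)$, because $b$ lies between $\inf f$ and $\sup f$ and $f$ is $1$-Lipschitz, while $|G_x - G_y| \le \|G\|_{Lip,\infty}\, d(x,y)$; the denominator is at least $\epsilon(G)$. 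This gives the first bound immediately (in fact with constant $1$ rather than $2$, so the stated estimate is comfortably satisfied). The crucial feature is that no derivative of $f$ ever meets $G$, because $f-b$ multiplies the \emph{difference} $G_x - G_y$.

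For the second inequality (first argument fixed, $\mu \to \nu$), the same device with $b := \Psi_{G_x}(\nu)(f)$ forces $\nu\big((f-b)G_x\big)=0$ and hence
\[
    \Psi_{G_x}(\mu)(f) - \Psi_{G_x}(\nu)(f) = \frac{(\mu-\nu)\big((f-b)G_x\big)}{\mu(G_x)}.
\]
Here I must bound $(\mu-\nu)\big((f-b)G_x\big)$ by $\W_1(\mu,\nu)$, which through duality requires the Lipschitz seminorm of the \emph{product} $(f-b)G_x$. This product step is where I expect the genuine difficulty to lie. The Leibniz estimate $\|(f-b)G_x\|_{Lip} \le \|f-b\|_\infty\|G_x\|_{Lip} + \|G_x\|_\infty\|f-b\|_{Lip}$ controls the first summand by $\diam(E)\|G\|_{\infty,Lip}$ exactly as wanted, but the second summand leaves a stray $\|G\|_\infty/\epsilon(G)$ factor (since $\|f-b\|_{Lip}=1$), which the stated constant $2\|G\|_{\infty,Lip}\diam(E)/\epsilon(G)$ does not display. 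A sanity check confirms that such a term is genuinely unavoidable: taking $G(x,\bdot)$ constant in its second argument makes $\Psi_{G_x}$ the identity, so the left side equals $\W_1(\mu,\nu)$ while the right side of the stated bound would be $0$. I would therefore either present the honest estimate carrying the extra $\|G\|_\infty/\epsilon(G)$ summand, or hunt for a coupling-based argument that transports mass under $\Psi_{G_x}(\mu)$ and $\Psi_{G_x}(\nu)$ simultaneously in the hope of folding the $\|G\|_\infty$ contribution into $\diam(E)\|G\|_{\infty,Lip}$. The first route is routine; reconciling the clean stated constant is the one point I would still need to settle.
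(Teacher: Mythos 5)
Your treatment of the first inequality is correct and is in fact a cleaner variant of the paper's own argument: the paper splits $|\Psi_{G(x,\bdot)}(\mu)(f)-\Psi_{G(y,\bdot)}(\mu)(f)|$ into a denominator-perturbation term and a numerator-perturbation term, bounds each one by $\|G\|_{Lip,\infty}\|f\|_{\infty}\,d(x,y)/\epsilon(G)$, and then invokes the same invariance-under-constants observation you use to assume $\|f\|_{\infty}\leq \diam(E)$. Your choice of the centering constant $b=\Psi_{G(y,\bdot)}(\mu)(f)$ collapses the two terms into one, and since only the positive measure $\mu$ is integrated against, pulling $\|f-b\|_{\infty}$ out pointwise is legitimate; you get the constant $1$ where the paper gets $2$, so your bound is strictly sharper.

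On the second inequality, your hesitation is not a defect of your approach: the stated bound is false, and your constant-potential sanity check is a genuine counterexample, not merely a heuristic. If $G\equiv c>0$, then $\Psi_{G(x,\bdot)}$ is the identity on $\calP_1(E)$, $\|G\|_{\infty,Lip}=0$, and the claimed inequality reads $\W_1(\mu,\nu)\leq 0$. Correspondingly, the paper's own proof of this half contains an invalid step: it asserts
\[
\left|\int \frac{G(z)f(z)}{\nu(G)}\,\mu(\dd z)-\int \frac{G(z)f(z)}{\nu(G)}\,\nu(\dd z)\right|
\leq \frac{\|f\|_{\infty}}{\nu(G)}\left|\int G(z)\,\mu(\dd z)-\int G(z)\,\nu(\dd z)\right|,
\]
i.e.\ it pulls $\|f\|_{\infty}$ out of an integral against the \emph{signed} measure $\mu-\nu$. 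This is exactly the step your counterexample kills: with $G\equiv c$ the right side is $0$ while the left side is $|\int f\,\dd(\mu-\nu)|$. The correct estimate is the one you derive, with the Leibniz bound on $\|(f-b)G(x,\bdot)\|_{Lip}$ forcing the extra summand:
\[
\W_1\bigl(\Psi_{G(x,\bdot)}(\mu),\Psi_{G(x,\bdot)}(\nu)\bigr)\leq \frac{\|G\|_{\infty,Lip}\diam(E)+\|G\|_{\infty}}{\epsilon(G)}\,\W_1(\mu,\nu),
\]
and no coupling argument can fold $\|G\|_{\infty}/\epsilon(G)$ into the other term: that ratio is at least $1$, and some such term must survive because $\Psi_{G(x,\bdot)}$ can be the identity while $\|G\|_{\infty,Lip}\diam(E)$ vanishes. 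So present your ``honest'' estimate; note also that the correction propagates to the bound on $\tau(\Psi_G)$ in Theorem~\ref{thm:contraction} and to the corollaries built on it, which should be restated with the additional $\|G\|_{\infty}/\epsilon(G)$ term.
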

\begin{proof}
For the first inequality, let $f$ be any 1-Lipschitz function, and $x,y \in E$. We have:
\begin{align*}
    |\Psi_{G(x,\bdot)}(\mu)(f) - \Psi_{G(y,\bdot)}(\mu)(f)| &= \left|\int \frac{G(x,z) f(z) }{\mu(G(x, \bdot))} - \frac{G(y,z) f(z)}{\mu(G(y, \bdot))} \mu(dz)\right| \\
    &\leq \left|\int \frac{G(x,z) f(z) }{\mu(G(x, \bdot))} - \frac{G(x,z) f(z)}{\mu(G(y, \bdot))} \mu(dz)\right| \\
    & \hspace{2cm} + \left|\int \frac{G(x,z) f(z)}{\mu(G(y, \bdot))} - \frac{G(y,z) f(z)}{\mu(G(y, \bdot))} \mu(dz)\right|.
\end{align*}
Let us bound the first term:
\begin{align*}
    \left|\int \frac{G(x,z) f(z)}{\mu(G(x, \bdot))} - \frac{G(x,z) f(z)}{\mu(G(y, \bdot))} \mu(dz)\right| &\leq \frac{|\mu(G(x, \bdot)) - \mu(G(y, \bdot))|}{\mu(G(x, \bdot))\mu(G(y, \bdot))} \int G(x,z) |f(z)| \mu(dz) \\
    &\leq \frac{\int |G(x, z) - G(y, z)| \mu(dz)}{\mu(G(x, \bdot))\epsilon(G)} \mu(G(x, \bdot)) \|f\|_{\infty} \\
    &\leq \frac{\int |G(x, z) - G(y, z)| \mu(dz) \|f\|_{\infty}}{\epsilon(G)d(x,y)}d(x,y) \\
    &\leq \frac{\|G\|_{Lip,\infty} \|f\|_{\infty}}{\epsilon(G)}d(x,y).
\end{align*}
Let us now bound the second term:
\begin{align*}
    \left|\int \frac{G(x,z) f(z)}{\mu(G(y, \bdot))} - \frac{G(y,z) f(z)}{\mu(G(y, \bdot))} \mu(dz)\right| &\leq \frac{\|f\|_{\infty}}{\epsilon(G)} \int |G(x,z) - G(y,z)| \mu(dz) \\
    &\leq \frac{\|G\|_{Lip,\infty} \|f\|_{\infty}}{\epsilon(G)}d(x,y).
\end{align*}
Using the fact that $\Psi_{G(x,\bdot)}(\mu)(\bar{f}) = \Psi_{G(y,\bdot)}(\mu)(\bar{f})$ for any constant function $\bar{f}$, we can subtract from $f$ any constant without changing the value of $|\Psi_{G(x,\bdot)}(\mu)(f) - \Psi_{G(y,\bdot)}(\mu)(f)|$. This allows us to assume without loss of generality that $\|f\|_{\infty} \leq \diam(E)$ (picking an arbitrary $x \in E$, we have $\forall y \in E$, $|f(y) - f(x)| \leq |y - x| \|f\|_{Lip} \leq \diam(E)$). Combining everything, we get:
\[
|\Psi_{G(x,\bdot)}(\mu)(f) - \Psi_{G(y,\bdot)}(\mu)(f)| \leq 2\frac{\|G\|_{Lip,\infty} \diam(E)}{\epsilon(G)}d(x,y).
\]
Taking the supremum over 1-Lipschitz functions $f$ concludes the first part of the proof.

Let us now prove the second inequality. Similarly, let $f$ be any 1-Lipschitz function, and $\mu,\nu$ two compactly supported distributions on $(E,\calE)$. We use the notation $G(z):=G(x,z)$ for this part because $x$ is fixed. We have:
\begin{align*}
    |\Psi_{G}(\mu)(f) - \Psi_{G}(\nu)(f)| &= \left|\int \frac{G(z) f(z)}{\mu(G)} \mu(dz) - \int \frac{G(z) f(z)}{\nu(G)} \nu(dz)\right| \\
    &\leq \left|\int \frac{G(z) f(z)}{\mu(G)} \mu(dz) - \int \frac{G(z) f(z)}{\nu(G)} \mu(dz)\right| \\
    & \hspace{2cm} + \left|\int \frac{G(z) f(z)}{\nu(G)} \mu(dz) -\int \frac{G(z) f(z)}{\nu(G)} \nu(dz)\right|.
\end{align*}
Let us bound the first term:
\begin{align*}
    \left|\int \left(\frac{G(z) f(z)}{\mu(G)} - \frac{G(z) f(z)}{\nu(G)}\right) \mu(dz)\right| &\leq \frac{|\mu(G) - \nu(G)|}{\mu(G)\nu(G)} \int G(z) |f(z)| \mu(dz) \\
    &\leq \frac{\|G\|_{Lip} \W_1(\mu,\nu)}{\mu(G)\epsilon(G)} \mu(G) \|f\|_{\infty} \\
    &\leq \frac{\|G\|_{Lip} \|f\|_{\infty}}{\epsilon(G)} \W_1(\mu,\nu).
\end{align*}
Let us now bound the second term:
\begin{align*}
    \left|\int \frac{G(z) f(z)}{\nu(G)} \mu(dz) -\int \frac{G(z) f(z)}{\nu(G)} \nu(dz)\right| &\leq \frac{\|f\|_\infty}{\nu(G)}\left|\int G(z) \mu(dz) -\int G(z) \nu(dz)\right|\\
    &\leq \frac{\|G\|_{Lip} \|f\|_{\infty}}{\epsilon(G)}\W_1(\mu,\nu).
\end{align*}
Using the same reasoning as above, we can assume without loss of generality that $\|f\|_{\infty} \leq \diam(E)$, which gives:
\[
|\Psi_{G}(\mu)(f) - \Psi_{G}(\nu)(f)| \leq 2 \frac{\|G\|_{Lip} \diam(E)}{\epsilon(G)}\W_1(\mu,\nu).
\]
Taking the supremum over all 1-Lipschitz functions $f$ concludes the proof.
\end{proof}

\begin{proposition}\label{prop:pi_contract}
	Suppose that $\Pi:\calP(E)\to \calP_\delta(E)$ is the measure projection $\mu\mapsto \delta_{\ovl{\mu}}$, where $\ovl{\mu} = \int x\mu(\dd x)$. Then, for $\mu,\nu\in \calP_1(E)$, $\W_1(\Pi(\mu), \Pi(\nu))\leq d \cdot \W_1(\mu,\nu)$.
\end{proposition}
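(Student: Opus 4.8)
The plan is to exploit the fact that $\Pi(\mu)$ and $\Pi(\nu)$ are Dirac measures, so that the Wasserstein distance between them collapses to the ground distance between their barycenters. First I would record that, since $\W_1$ is built from the $\ell_1$ metric on $\R^d$ (the metric underlying the Lipschitz seminorm used here), the Kantorovich--Rubinstein dual immediately gives $\W_1(\delta_a,\delta_b)=\|a-b\|_1$ for any $a,b\in E$: every $1$-Lipschitz $f$ obeys $|f(a)-f(b)|\le\|a-b\|_1$, and the test function $f(x)=\|x-b\|_1$ attains this. Applying this with $a=\ovl{\mu}$ and $b=\ovl{\nu}$ reduces the claim to bounding $\|\ovl{\mu}-\ovl{\nu}\|_1$ by $d\cdot\W_1(\mu,\nu)$.

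Next I would expand the $\ell_1$ norm coordinate-wise, writing $\|\ovl{\mu}-\ovl{\nu}\|_1=\sum_{j=1}^d|\ovl{\mu}_j-\ovl{\nu}_j|$, where $\ovl{\mu}_j=\int x_j\,\mu(\dd x)$ is the $j$-th coordinate of the barycenter. The key observation is that each coordinate projection $f_j(x):=x_j$ is $1$-Lipschitz with respect to the $\ell_1$ metric, since $|x_j-y_j|\le\|x-y\|_1$. Hence $f_j\in Lip_1(E)$, and the dual formula for $\W_1$ yields $|\ovl{\mu}_j-\ovl{\nu}_j|=|\mu(f_j)-\nu(f_j)|\le\W_1(\mu,\nu)$ for every $j$ (testing with $\pm f_j$ removes the absolute value). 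Summing these $d$ coordinate bounds gives $\|\ovl{\mu}-\ovl{\nu}\|_1\le d\cdot\W_1(\mu,\nu)$, which is exactly the desired estimate.

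There is no deep obstacle here; the entire content is the bookkeeping between norms. The one point requiring care --- and the genuine source of the factor $d$ --- is that the coordinate functions are $1$-Lipschitz only for the $\ell_1$ ground metric, and there are $d$ of them, each charged at the full cost $\W_1(\mu,\nu)$. I would also briefly confirm that the barycenters $\ovl{\mu},\ovl{\nu}$ exist and lie in $E$, which is guaranteed since $\mu,\nu\in\calP_1(E)$ have finite first moments and $E$ is convex, so that $\Pi(\mu),\Pi(\nu)$ are well-defined elements of $\calP_\delta(E)$ and the computation above is legitimate.
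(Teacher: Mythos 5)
Your proof is correct and follows essentially the same route as the paper's: both reduce $\W_1(\Pi(\mu),\Pi(\nu))$ to $\|\ovl{\mu}-\ovl{\nu}\|_1$, expand coordinate-wise, and use the fact that each coordinate projection is $1$-Lipschitz for the $\ell_1$ ground metric together with Kantorovich--Rubinstein duality to charge each of the $d$ coordinates at cost $\W_1(\mu,\nu)$. The only cosmetic difference is that the paper bounds the sum by $d$ times the maximum coordinate discrepancy before invoking duality, while you bound each coordinate term directly and sum; the two are identical in content.
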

\begin{proof}
	Denote by $\pi_i:E\to \R$ the canonical projection onto the $i$-th coordinate of $E\subset \R^d$, and let $x_i := \pi_i(x)$. Moreover, denote $F(x)=x$, remarking that $\mu(F)=\int F(x)\mu(\dd x) = \int x\mu(\dd x)=\ovl{\mu}$. Then
	\begin{align*}
 		\W_1(\Pi(\mu), \Pi(\nu)) &= \W_1(\delta_{\mu(F)}, \delta_{\nu(F)})\\
	 	&=\|\mu(F) - \nu(F)\|_1\\
	 	&= \sum_{i=1}^d |\mu(F)_i - \nu(F)_i| \\
 		&= \sum_{i=1}^d |\mu(\pi_i\circ F) - \nu(\pi_i\circ F)|\\
 		&\leq d \cdot \max_{i=1,\dots,d}\{|\mu(\pi_i\circ F) - \nu(\pi_i\circ F)|\}\\
 		&\leq d\cdot \sup_{f\in Lip(1)}|\mu(f) - \nu(f)|\\
 		&= d\cdot \W_1(\mu,\nu)
 	\end{align*}
    since $\pi_i\circ F\in Lip(1)$ for $i=1,\dots,d$.
\end{proof}

\begin{proposition}\label{prop:lookup_contract}
	Suppose $L:E\times \calE\to [0,1]$ is a lookup kernel implementing a deterministic lookup function $\ell:E\to E$, (i.e. $L(x,\dd y)=\delta_{\ell(x)}(\dd y)$) and suppose that $\ell$ is $K_\ell$-Lipschitz in the 1-norm, then $\W_1(\mu L, \gamma L)\leq K_\ell \W_1(\mu,\gamma)$.
\end{proposition}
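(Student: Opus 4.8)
The plan is to recognize that the left action of the deterministic lookup kernel $L$ is precisely the pushforward under $\ell$, and then exploit the Kantorovich--Rubinstein dual formulation of $\W_1$ used throughout the appendix. First I would compute the action of $L$ on measurable functions: since $L(x,\dd y)=\delta_{\ell(x)}(\dd y)$, we have $L(f)(x)=\int f(y)\,\delta_{\ell(x)}(\dd y)=f(\ell(x))$, so that $L(f)=f\circ\ell$. Consequently, for any measure $\alpha$ the identity $(\alpha L)(f)=\alpha(L(f))=\alpha(f\circ\ell)$ holds, which transfers the problem on the pushed-forward measures $\mu L,\gamma L$ back to the original measures tested against composed functions.

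Next I would apply the dual formula to write
\[
\W_1(\mu L, \gamma L)=\sup_{f\in Lip_1(E)}\left|\mu(f\circ\ell)-\gamma(f\circ\ell)\right|.
\]
The key observation is that composing a $1$-Lipschitz $f$ with the $K_\ell$-Lipschitz map $\ell$ yields a $K_\ell$-Lipschitz function, since $|f(\ell(x))-f(\ell(y))|\leq \|\ell(x)-\ell(y)\|_1\leq K_\ell\|x-y\|_1$. Hence $K_\ell^{-1}(f\circ\ell)$ is $1$-Lipschitz. Factoring out $K_\ell$ and noting that the family $\{K_\ell^{-1}(f\circ\ell):f\in Lip_1(E)\}$ is contained in $Lip_1(E)$, the supremum over this smaller family is dominated by the supremum over all $1$-Lipschitz functions, which is exactly $\W_1(\mu,\gamma)$. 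This gives $\W_1(\mu L,\gamma L)\leq K_\ell\,\W_1(\mu,\gamma)$.

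There is essentially no hard step here: the argument is a one-line duality computation once $L(f)=f\circ\ell$ is identified. The only points requiring care are bookkeeping. The distance underlying $\W_1$ is the $\ell_1$ metric (as in the rest of the paper), so ``$1$-Lipschitz'' and the constant $K_\ell$ must both be measured in the $1$-norm for the composition inequality to chain correctly, matching the hypothesis that $\ell$ is $K_\ell$-Lipschitz in the $1$-norm. The degenerate case $K_\ell=0$ should be dispatched separately: then $\ell$ is constant, both pushforwards coincide as a single Dirac measure, and $\W_1(\mu L,\gamma L)=0$, so the bound holds trivially.
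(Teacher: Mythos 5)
Your proof is correct and follows essentially the same route as the paper's: both identify $L(f)=f\circ\ell$, apply the Kantorovich--Rubinstein duality, observe that $f\circ\ell$ is $K_\ell$-Lipschitz, and factor out $K_\ell$ to dominate the supremum by $\W_1(\mu,\gamma)$. Your explicit treatment of the degenerate case $K_\ell=0$ is a small refinement the paper omits, but it does not change the argument.
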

\begin{proof}
	\begin{align*}
		\W_1(\mu L,\gamma L) &= \sup_{f\in Lip(1)}\left|\int f(x)\mu L(\dd x) - \int f(y)\gamma L(\dd y)\right|\\
		&= \sup_{f\in Lip(1)} \left|\int f(x)\int \mu(\dd z)L(z,\dd x) - \int f(y)\int \gamma(\dd z) L(z,\dd y)\right|\\
		&= \sup_{f\in Lip(1)} \left|\iint  f(x)L(z,\dd x)\mu(\dd z) - \iint f(y)L(z,\dd y)\gamma(\dd z)\right|\\
		&= \sup_{f\in Lip(1)} \left|\iint  f(x)\delta_{\ell(z)}(\dd x)\mu(\dd z) - \iint f(y)\delta_{\ell(z)}(\dd y)\gamma(\dd z)\right|\\
		&= \sup_{f\in Lip(1)} \left| \int f\circ \ell(z)\mu(\dd z) - \int f\circ \ell(z) \gamma(\dd z)\right|.
	\end{align*}
	Then since $\|f\|_{Lip} = 1$, we have $\|f\circ \ell\|_{Lip} \leq \|f\|_{Lip} \|\ell\|_{Lip} = K_\ell$. Hence, by our earlier estimation techniques:
\begin{align*}
    	\W_1(\mu L,\gamma L) &= \sup_{f\in Lip(1)} \left| \int f\circ \ell(\dd z)\mu(\dd z) - \int f\circ \ell(z) \gamma(\dd z)\right| \\ 
		&\leq K_\ell  \sup_{g\in Lip(1)} \left| \int g(\dd z)\mu(\dd z) - \int g(z) \gamma(\dd z)\right| =K_\ell \W_1(\mu,\gamma),
\end{align*}
which concludes the proof.
\end{proof}

\begin{lemma}\label{lem:tau}
	\begin{enumerate}
		\item Suppose that $\Phi,\Gamma:\calP(E)\to \calP(E)$ are (possibly nonlinear) mappings. Then
		\[
			\tau(\Phi\circ\Gamma) \leq \tau(\Phi)\tau(\Gamma).
		\]
		\item Suppose $K:E\times \calE\to [0,1]$ is an integral kernel. Then
		\[
			\tau(K) = \sup_{x\neq y}\frac{\W_1(K(x,\bdot), K(y,\bdot))}{d(x,y)}.
		\]
		\item Suppose $K_1,K_2:E\times \calE\to [0,1]$ are two integral kernels and $\nu \in \mathcal{P}(E)$. Then:
		\[
			\W_1(\nu K_1, \nu K_2) \leq \int \nu(dx) \W_1(K_1(x,\bdot), K_2(x,\bdot)).
		\]
	\end{enumerate}
\end{lemma}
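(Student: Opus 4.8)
The plan is to treat the three parts separately, all resting on the Kantorovich--Rubinstein dual formula recorded in the Background paragraph, $\W_1(\mu,\nu) = \sup_{f\in Lip_1(E)}|\mu(f)-\nu(f)|$, together with its immediate consequence that $|\mu(g)-\nu(g)| \leq \|g\|_{Lip}\,\W_1(\mu,\nu)$ for any Lipschitz $g$ (rescale $g/\|g\|_{Lip}$ into $Lip_1(E)$). Throughout I use the adjoint relation $(\mu K)(f)=\mu(K(f))$, where $K(f)(x):=\int f(y)\,K(x,\dd y)$ as in the remarks after Definition~\ref{defn:markov_kernel}.

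Part (1) is purely formal. For $\mu\neq\nu$ I chain the defining inequality of $\tau$ twice: $\W_1(\Phi\Gamma(\mu),\Phi\Gamma(\nu)) \leq \tau(\Phi)\,\W_1(\Gamma(\mu),\Gamma(\nu)) \leq \tau(\Phi)\tau(\Gamma)\,\W_1(\mu,\nu)$, where the first step holds vacuously when $\Gamma(\mu)=\Gamma(\nu)$ since the left-hand side is then $0$. Dividing by $\W_1(\mu,\nu)$ and taking the supremum over $\mu\neq\nu$ gives the claim.

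Part (2) is the crux of the lemma. Write $c:=\sup_{x\neq y}\W_1(K(x,\bdot),K(y,\bdot))/d(x,y)$. The lower bound $\tau(K)\geq c$ follows by restricting the supremum defining $\tau(K)$ to Dirac inputs, using $\delta_x K = K(x,\bdot)$ and $\W_1(\delta_x,\delta_y)=d(x,y)$. The real work is the matching upper bound, i.e.\ showing the worst case is attained on Diracs; I would argue on the dual side. The key observation is that $K(f)$ inherits the kernel's Lipschitz regularity: for $f\in Lip_1(E)$, $|K(f)(x)-K(f)(y)| = |K(x,\bdot)(f)-K(y,\bdot)(f)| \leq \W_1(K(x,\bdot),K(y,\bdot)) \leq c\,d(x,y)$, so $\|K(f)\|_{Lip}\leq c$. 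Then $|(\mu K)(f)-(\nu K)(f)| = |\mu(K(f))-\nu(K(f))| \leq \|K(f)\|_{Lip}\,\W_1(\mu,\nu) \leq c\,\W_1(\mu,\nu)$, and taking the supremum over $f\in Lip_1(E)$ yields $\W_1(\mu K,\nu K)\leq c\,\W_1(\mu,\nu)$, hence $\tau(K)\leq c$. This passage from the Dirac coefficient to the general coefficient, via the Lipschitz regularity of $K(f)$, is the step I expect to be the main obstacle and the only genuinely non-mechanical part.

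Part (3) is then a short dual computation. For $f\in Lip_1(E)$ I expand both pushforwards and use Fubini to write $(\nu K_1)(f)-(\nu K_2)(f) = \int \nu(\dd x)\,\big(K_1(x,\bdot)(f)-K_2(x,\bdot)(f)\big)$, bound the integrand pointwise by $\W_1(K_1(x,\bdot),K_2(x,\bdot))$ (duality for each fixed $x$), and note this bound is uniform in $f$, so the supremum over $f$ respects it and gives exactly $\W_1(\nu K_1,\nu K_2)\leq \int\nu(\dd x)\,\W_1(K_1(x,\bdot),K_2(x,\bdot))$. The only mild subtlety here is measurability and integrability of $x\mapsto \W_1(K_1(x,\bdot),K_2(x,\bdot))$ so that the right-hand integral is well-defined; this is routine given that the $K_i$ are Markov kernels, but it is the one place where the argument is not simply symbol-pushing.
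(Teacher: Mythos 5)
Your proposal is correct and matches the paper's own proof in all three parts: the same chaining argument for (1), the same dual argument for (2) whose crux is that $K(f)$ inherits Lipschitz constant at most $\sup_{x\neq y}\W_1(K(x,\bdot),K(y,\bdot))/d(x,y)$ from duality applied to $K(x,\bdot)$ versus $K(y,\bdot)$, and the same Fubini-plus-pointwise-duality computation for (3). If anything, you are slightly more careful than the paper, which in part (1) divides by $\W_1(\Gamma(\mu),\Gamma(\nu))$ without flagging the degenerate case $\Gamma(\mu)=\Gamma(\nu)$ that you handle explicitly.
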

\begin{proof}
	\begin{enumerate}
		\item This is a standard result on Lipschitz constants. We include it for completeness:
		\begin{align*}
			\tau(\Phi\circ\Gamma) &= \sup_{\mu\neq \nu}\frac{\W_1(\Phi\circ\Gamma(\mu), \Phi\circ\Gamma(\nu))}{\W_1(\mu,\nu)}\\
			&= \sup_{\mu\neq \nu}\frac{\W_1(\Phi\circ\Gamma(\mu), \Phi\circ\Gamma(\nu))}{\W_1(\Gamma(\mu),\Gamma(\nu))}\frac{\W_1(\Gamma(\mu), \Gamma(\nu))}{\W_1(\mu,\nu)}\\
			&\leq \sup_{\eta\neq \gamma}\frac{\W_1(\Phi(\eta), \Phi(\gamma))}{\W_1(\eta,\gamma)}\cdot \sup_{\mu\neq \nu}\frac{\W_1(\Gamma(\mu), \Gamma(\nu))}{\W_1(\mu,\nu)}\\
			&= \tau(\Phi)\tau(\Gamma).
		\end{align*}
		\item Since $\W_1(\delta_x, \delta_y) = d(x,y)$ and $\delta_xK = K(x, \bdot)$ we have:
		\[
			\sup_{x\neq y}\frac{\W_1(K(x,\bdot), K(y,\bdot))}{d(x,y)}
			= \sup_{\delta_x\neq \delta_y}\frac{\W_1(\delta_x K, \delta_y K)}{\W_1(\delta_x,\delta_y)}\leq \sup_{\mu\neq \nu}\frac{\W_1(\mu K, \nu K)}{\W_1(\mu,\nu)}.
		\]For the reverse inequality,
		\begin{align*}
			\W_1(\mu K, \nu K)& = \sup_{f\in Lip(1)} |\mu K(f) - \nu K(f)|\\
			&= \sup_{f\in Lip(1)} |\mu(Kf) - \nu(Kf)|\\
			&\leq \sup_{f\in Lip(1)} \|Kf\|_{Lip} \cdot \sup_{g\in Lip(1)} |\mu(g) - \nu(g)|\\
			&\leq \sup_{f\in Lip(1)}\|Kf\|_{Lip}\cdot \W_1(\mu,\nu)
		\end{align*}
		and 
		\begin{align*}
			\sup_{f\in Lip(1)}\|Kf\|_{Lip} &= \sup_{f\in Lip(1)}\sup_{x\neq y} \frac{\int K(x,\dd z)f(z)- \int K(y,\dd z)f(z)}{d(x,y)}\\
			&= \sup_{f\in Lip(1)}\sup_{x\neq y} \frac{\int [K(x,\dd z) - K(y,\dd z)]f(z)}{d(x,y)}\\
			&= \sup_{x\neq y}\frac{\W_1(K(x,\bdot),K(y,\bdot))}{d(x,y)}.
		\end{align*}
		Dividing by $\W_1(\mu,\nu)$ gives us the reverse inequality and concludes the proof.
		\item By definition, we have:
		\begin{align*}
		    \W_1(\nu K_1, \nu K_2) &= \sup_{f\in Lip(1)} |\nu K_1(f) - \nu K_1(f)| \\
		    &= \sup_{f\in Lip(1)} \left|\iint \nu(dx) K_1(x,dy) f(y) - \iint \nu(dx) K_2(x,dy) f(y)\right| \\
		    &\leq \sup_{f\in Lip(1)} \int \nu(dx) \left|\int K_1(x,dy) f(y) - K_2(x,dy) f(y)\right| \\
		    &\leq \int \nu(dx) \W_1(K_1(x,\bdot), K_2(x,\bdot)).
		\end{align*}
	\end{enumerate}
\end{proof}

\noindent
Using Propositions~\ref{prop:psi_contract}, \ref{prop:pi_contract} and \ref{prop:lookup_contract} and Lemma~\ref{lem:tau}, we can prove Theorem~\ref{thm:contraction}.
\contraction*
\begin{proof}
    We want to bound $\displaystyle{\sup_{\mu\neq \nu}\frac{\W_1(\mu A_\mu, \nu A_\nu)}{\W_1(\mu,\nu)}}$. Let $\mu \neq \nu \in \calP(E)$, we have:
    \begin{align*}
        \frac{\W_1(\mu A_\mu, \nu A_\nu)}{\W_1(\mu,\nu)} &\leq \frac{\W_1(\mu A_\mu, \nu A_\mu)}{\W_1(\mu,\nu)} + \frac{\W_1(\nu A_\mu, \nu A_\nu)}{\W_1(\mu,\nu)}
    \end{align*}
    
Let us start with the first term:
    \begin{align*}
        \frac{\W_1(\mu A_\mu, \nu A_\mu)}{\W_1(\mu,\nu)}
        &\leq \frac{\W_1(\mu\Pi[\Psi_{G(\bdot,\bdot)}(\mu)L], \nu\Pi[\Psi_{G(\bdot,\bdot)}(\mu)L])}{\W_1(\mu,\nu)} \\
        &\leq \sup_{x\neq y}\frac{\W_1(\Pi[\Psi_{G(x,\bdot)}(\mu)L], \Pi[\Psi_{G(y,\bdot)}(\mu)L])}{d(x,y)} \\
        &\leq \tau_1(\Pi) \tau_1(L) \sup_{x\neq y}\frac{\W_1(\Psi_{G(x,\bdot)}(\mu), \Psi_{G(y,\bdot)}(\mu))}{d(x,y)}\\
        &\leq \tau_1(\Pi) \tau_1(L) \frac{2\|G\|_{Lip,\infty}\diam(E)}{\epsilon(G)}, 
    \end{align*}
where we used Lemma~\ref{lem:tau} for the second and third lines, and Propositions~\ref{prop:psi_contract}, \ref{prop:pi_contract} and \ref{prop:lookup_contract} for the third and last. As for the second term, we have:
    \begin{align*}
        \W_1(\nu A_\mu, \nu A_\nu)
        &=\W_1(\nu\Pi[\Psi_{G}(\mu)L], \nu\Pi[\Psi_{G}(\nu)L]) \\
        &\leq \int \nu(dx) \W_1(\Pi[\Psi_{G(x,\bdot)}(\mu)L], \Pi[\Psi_{G(x,\bdot)}(\nu)L]) \\
        &\leq \tau_1(\Pi) \tau_1(L) \int \nu(dx) \W_1(\Psi_{G(x,\bdot)}(\mu), \Psi_{G(x,\bdot)}(\nu)) \\
        &\leq \tau_1(\Pi) \tau_1(L) \int \nu(dx) \frac{2\|G(x,\bdot)\|_{Lip}\diam(E)}{\epsilon(G)} \W_1(\mu,\nu) \\
        &\leq \tau_1(\Pi) \tau_1(L) \frac{2 \|G\|_{\infty,Lip}\diam(E)}{\epsilon(G)}\W_1(\mu,\nu)
    \end{align*}
    where we also used Lemma~\ref{lem:tau} for the second and third lines, and Propositions~\ref{prop:psi_contract}, \ref{prop:pi_contract} and \ref{prop:lookup_contract} for the third and last.
\end{proof}

\section{Proofs From Section~\ref{subsec:unbounded}}\label{app:unbounded}

\begin{lemma}\label{lem:local_lip}
    For any $f:\R^d\to \R$, we have 
    \begin{equation}
        \|f\|_{Lip} = \sup_{x \neq y, \|x-y\| \leq 1} \frac{|f(x) - f(y)|}{\|x - y\|}.
    \end{equation}
\end{lemma}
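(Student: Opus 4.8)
The plan is to prove the two inequalities separately, with the nontrivial content residing in a chaining argument along line segments. Write $A := \|f\|_{Lip} = \sup_{x\neq y} |f(x)-f(y)|/\|x-y\|$ and let $B$ denote the right-hand side, the supremum of the same quotient taken only over pairs with $\|x-y\|\leq 1$. The inequality $B \leq A$ is immediate, since $B$ is a supremum over a subset of the pairs admissible for $A$. All the work goes into the reverse inequality $A \leq B$.

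For $A \leq B$, I would fix an arbitrary pair $x \neq y$ and subdivide the straight segment joining them into pieces each of length at most $1$. Concretely, set $r := \|x-y\|$, let $n := \lceil r \rceil$, and define interpolation points $z_i := x + (i/n)(y-x)$ for $i = 0,\dots,n$, so that $z_0 = x$, $z_n = y$, and each consecutive gap satisfies $\|z_{i+1}-z_i\| = r/n \leq 1$. Because these points are collinear and ordered along the segment, their gap-lengths sum exactly to $r$, a fact I would invoke in the final step.

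The key step is the telescoping estimate: by the triangle inequality, followed by bounding each short-gap quotient by $B$ (legitimate since every gap has length $\leq 1$),
\[
|f(x) - f(y)| \leq \sum_{i=0}^{n-1} |f(z_{i+1}) - f(z_i)| \leq B \sum_{i=0}^{n-1} \|z_{i+1}-z_i\| = B\,\|x-y\|.
\]
Dividing by $\|x-y\|$ and taking the supremum over all pairs $x \neq y$ yields $A \leq B$, and combining with the easy direction gives $A = B$.

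I expect the only genuine subtlety to be bookkeeping: ensuring the subdivision really has unit-bounded gaps (handled by the ceiling choice of $n$) and that the gap-lengths telescope to the full distance (which follows from collinearity in $\R^d$, where the entire segment automatically lies in the domain, so no convexity hypothesis is needed here). The argument is also robust to the value of $A$: if $B = \infty$ the identity is trivial, while if $B < \infty$ the chaining estimate shows $f$ is globally Lipschitz with constant exactly $B$.
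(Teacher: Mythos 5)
Your proof is correct and follows essentially the same route as the paper's: both subdivide the segment $[x,y]$ into collinear pieces of length at most $1$, telescope with the triangle inequality, and bound each short-gap quotient by the restricted supremum to get $|f(x)-f(y)| \leq B\|x-y\|$. If anything, your explicit ceiling construction $n = \lceil \|x-y\| \rceil$ and your handling of the infinite cases are slightly tidier than the paper's case analysis, but the underlying argument is identical.
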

\begin{proof}
    Let $x \neq y$ and $L := \sup_{x \neq y, \|x-y\| \leq 1} \frac{|f(x) - f(y)|}{\|x - y\|}\leq \infty$. First, assume $\|f\|_{Lip},L<\infty$. It is clear that $L\leq \|f\|_{Lip}$ since $\{x\neq y,\|x-y\|\leq 1\}\subset\{x\neq y\}$. For the reverse inequality, we split the segment $[x,y]$ into the minimum number of chunks of lengths smaller than 1: $x = z_1 \rightarrow z_2 \rightarrow \cdots \rightarrow z_k = y$ (in particular, if $\|x - y \| \leq 1$ then $z_2 = y$). Then
    \begin{align*}
        |f(x) - f(y)| &\leq \sum_{1 \leq i \leq k-1} |f(z_i) - f(z_{i+1})| \\
        &\leq L \sum_{1 \leq i \leq k-1} \|z_i - z_{i+1}\| = L \|x - y\|.
    \end{align*}
    which gives $\|f\|_{Lip} \leq L$ so $L=\|f\|_{Lip}$. Now if $\|f\|_{Lip}=\infty$ but $L<\infty$, by applying the above argument we can obtain a contradiction. Finally, it suffices to note that the case where $\|f\|_{Lip}<\infty$ but $L=\infty$ is impossible since $\|f\|_{Lip}\geq L$.
\end{proof}

\begin{lemma}\label{lem:ratio}
    For any $n$ and $(z_1,\cdots,z_n) \in \mathbb{R}^n_+$:
    \begin{equation}
    f(z_1,\cdots,z_n) := \frac{\sum^{n}_{i=1} z_i e^{-z_i^2} }{1 + \sum^{n}_{i=1} e^{-z_i^2}} \leq \sqrt{\ln{n} + \frac{1}{2e}}.
    \end{equation}
\end{lemma}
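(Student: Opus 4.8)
The plan is to analyze the function $f(z_1,\dots,z_n) = \frac{\sum_i z_i e^{-z_i^2}}{1 + \sum_i e^{-z_i^2}}$ by first reducing to a single-variable optimization. The key observation is that each coordinate $z_i$ enters through the pair of quantities $z_i e^{-z_i^2}$ (in the numerator) and $e^{-z_i^2}$ (in the denominator). I would begin by studying the scalar function $\phi(z) := z e^{-z^2}$ on $\R_+$: it vanishes at $0$, increases to a maximum at $z = 1/\sqrt{2}$ with value $\phi(1/\sqrt{2}) = \frac{1}{\sqrt{2e}}$, and decays to $0$. This bound $\phi(z) \le \frac{1}{\sqrt{2e}}$ controls each numerator term, but a naive application (bounding the numerator by $n \cdot \frac{1}{\sqrt{2e}}$ and the denominator below by $1$) gives $f \le \frac{n}{\sqrt{2e}}$, which is far weaker than the claimed $\sqrt{\ln n + \frac{1}{2e}}$. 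So the crude bound is not enough; the denominator must be used more carefully.

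The heart of the argument is that large values of $z_i$ (which we need to make the numerator big) simultaneously make $e^{-z_i^2}$ small, so there is tension between the terms. The strategy I would pursue is a Lagrange-multiplier / symmetry argument: at an interior maximum, by symmetry and convexity considerations one expects the optimum to occur either when all $z_i$ are equal, or in a configuration that can be reduced to the equal case. Setting all $z_i = z$ reduces the problem to maximizing the single-variable function
\[
g(z) := \frac{n z e^{-z^2}}{1 + n e^{-z^2}} = \frac{z}{e^{z^2}/n + 1}.
\]
I would then maximize $g$ over $z \ge 0$. Writing $t = e^{z^2}/n$, or differentiating $g$ directly, one finds the critical-point condition; the maximizing $z^*$ satisfies $z^* \approx \sqrt{\ln n}$ for large $n$, which is precisely where the $\sqrt{\ln n}$ in the target bound comes from. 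The extra additive $\frac{1}{2e}$ inside the square root should emerge from bounding the residual lower-order contribution.

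The main obstacle will be rigorously justifying that the symmetric configuration $z_1 = \dots = z_n$ is optimal (or at least gives an upper bound for all configurations), rather than merely analyzing that special case. One clean route avoiding this issue entirely: bound $f$ directly by splitting each numerator term as $z_i e^{-z_i^2} = z_i e^{-z_i^2}$ and using that for the denominator $1 + \sum_i e^{-z_i^2} \ge 1$, combined with the sharper inequality $z e^{-z^2} \le \sqrt{\ln n}\, e^{-z^2} + (z - \sqrt{\ln n})_+ e^{-z^2}$ for each term, controlling the overshoot $(z - \sqrt{\ln n})_+ e^{-z^2}$ uniformly. Concretely, for $z \le \sqrt{\ln n}$ we bound $z e^{-z^2} \le \sqrt{\ln n}\, e^{-z^2}$, so those terms contribute at most $\sqrt{\ln n} \cdot \frac{\sum_i e^{-z_i^2}}{1 + \sum_i e^{-z_i^2}} \le \sqrt{\ln n}$; for $z > \sqrt{\ln n}$ one uses $e^{-z^2} < \frac{1}{n}$ together with $z e^{-z^2} \le \frac{1}{\sqrt{2e}}$, summing the residual contributions to produce the $\frac{1}{2e}$ term. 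Assembling these two regimes and using $\sqrt{a} + \sqrt{b}$-type estimates (or, more efficiently, folding them under a single square root via $\sqrt{\ln n + \frac{1}{2e}}$) yields the stated bound. I expect the delicate bookkeeping to lie in matching the constants exactly to $\frac{1}{2e}$ rather than a looser constant, which may require the explicit value $\max_z z e^{-z^2} = \frac{1}{\sqrt{2e}}$ and hence $\max_z (z e^{-z^2})^2 = \frac{1}{2e}$.
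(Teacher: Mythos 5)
Your first route is, in outline, the same as the paper's proof: reduce to a symmetric configuration $z_1=\cdots=z_n$ and then maximize $g(z)=\frac{nze^{-z^2}}{1+ne^{-z^2}}$. But the two steps you leave open are precisely the content of the paper's argument. The symmetric reduction, which you yourself flag as ``the main obstacle,'' is settled in the paper by the explicit computation
\[
\frac{\partial f}{\partial z_i} \;=\; \frac{e^{-z_i^2}}{1+\sum_{k} e^{-z_k^2}}\,\bigl(1 - 2z_i^2 + 2z_i f(z_1,\dots,z_n)\bigr),
\]
so at a maximum every coordinate solves the same quadratic $1-2t^2+2tf^*=0$, which has exactly one positive root (the product of its roots is $-1/2<0$); hence all coordinates coincide. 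And the constant $\tfrac{1}{2e}$ does not simply ``emerge from the residual contribution'': the paper extracts it by substituting $y=\ln n - x^2$, so that $g\le \frac{\sqrt{\ln n - y}}{1+e^{-y}}$, splitting into $y\in[0,\ln n]$ (where $g\le\sqrt{\ln n}$) and $y\le 0$, and on the latter range bounding the \emph{square} $g^2 \le (\ln n - y)e^{2y}\le \ln n + \max_{w\ge 0} we^{-2w} = \ln n + \tfrac{1}{2e}$. The crucial feature is that the correction is obtained inside the square root.

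Your fallback ``clean route'' cannot repair this, because its bookkeeping provably falls short of the stated constant. The terms with $z_i\le\sqrt{\ln n}$ contribute at most $\sqrt{\ln n}\cdot\frac{S}{1+S}\le\sqrt{\ln n}$, where $S=\sum_i e^{-z_i^2}$; for a term with $z_i=\sqrt{\ln n}+w$, $w>0$, the overshoot is $(z_i-\sqrt{\ln n})e^{-z_i^2}\le \frac{1}{n}\,we^{-2w\sqrt{\ln n}}\le \frac{1}{2en\sqrt{\ln n}}$, so summing over at most $n$ such terms the total bound you obtain is $\sqrt{\ln n}+\frac{1}{2e\sqrt{\ln n}}$. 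Squaring gives $\ln n + \frac{1}{e} + \frac{1}{4e^2\ln n} > \ln n + \frac{1}{2e}$, i.e.\ this is \emph{strictly weaker} than the lemma. The root of the problem is your final ``folding'' step, which goes in the wrong direction: since $\sqrt{a}+\sqrt{b}\ge\sqrt{a+b}$, an additive correction $\delta$ outside the square root can only be absorbed into $\sqrt{\ln n + \tfrac{1}{2e}}$ if $\delta \le \sqrt{\ln n+\tfrac{1}{2e}}-\sqrt{\ln n}\le \frac{1}{4e\sqrt{\ln n}}$, which is half of what the two-regime split actually delivers. So this route proves a qualitatively similar but genuinely weaker inequality, not the statement with the constant $\tfrac{1}{2e}$; to get that constant you need the squared, single-variable analysis (or an equivalently sharp argument), together with an actual proof of the symmetric reduction.
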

\begin{proof}
    $f$ is clearly bounded on $\mathbb{R}^n_+$ ($z_i e^{-z_i^2} \to 0$ when $z_i \to \infty$).
    Let us now compute the partial derivatives of $f$. For a given $z_i$:
    \begin{align*}
        \frac{\partial f}{\partial z_i} = \frac{e^{-z_i^2}}{1 + \sum^{n}_{k=1} e^{-z_k^2}} [1 - 2 z_i^2 + 2 z_i f(z_1,\cdots,z_n)].
    \end{align*}
    There is only one positive solution of $1 - 2 z_i^2 + 2 z_i f^* = 0$, meaning that $f$ reaches its maximum when all its coordinates are equal. We thus only need to study:
    \begin{equation}
        g(x) := \frac{n x e^{-x^2}}{1 + n e^{-x^2}} = \frac{x e^{\ln{n} - x^2}}{1 + e^{\ln{n} - x^2}}.
    \end{equation}
    The change of variable $y = \ln{n} - x^2$ gives $g(y) = \frac{\sqrt{\ln{n} - y} e^{y}}{1 + e^{y}} \leq \frac{\sqrt{\ln{n} - y}}{1 + e^{-y}}$ with $y \in ]-\infty,\ln{n}]$. 
    
    On $[0,\ln{n}]$, we clearly have $g(y) \leq \sqrt{\ln{n}}$. Let us consider $y \in ]-\infty,0]$. We get $g^2(y) = \frac{\ln{n} - y}{(1 + e^{-y)^2}} \leq \frac{\ln{n} - y}{e^{-2y}} \leq \ln{n} + \frac{1}{2e}$ with since $(2e)^{-1}$ is the maximum of of $z e^{-2z}$ on $\mathbb{R}_+$. This concludes the proof.
\end{proof}

\begin{lemma}\label{lem:tensor}
	Let $\mu_1,\mu_2,\nu_1,\nu_2\in \calW_1(\R^d)$. Then
	\[
		\W_1(\mu_1\otimes\mu_2, \nu_1\otimes \nu_2)\leq \W_1(\mu_1,\nu_1) + \W_1(\mu_2, \nu_2)
	\]
\end{lemma}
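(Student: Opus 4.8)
The plan is to work with the primal (coupling) characterization of $\W_1$ rather than the Lipschitz-dual definition stated in the text, since the additive structure of the bound falls out transparently from a product coupling. By the Kantorovich--Rubinstein duality \cite[Ch.~6]{villani2008optimal} already invoked in the paper, $\W_1(\mu,\nu) = \inf_\pi \int \|x-y\|\, \pi(\dd x, \dd y)$, where the infimum ranges over all couplings $\pi$ of $\mu$ and $\nu$; on the product space $\R^d\times\R^d$ I use the $\ell_1$-additive metric $\|(x_1,x_2)-(y_1,y_2)\| = \|x_1-y_1\| + \|x_2-y_2\|$, consistent with the $\ell_1$ distance used throughout the paper.

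First I would select optimal couplings $\pi_1\in\calP(\R^d\times\R^d)$ of $(\mu_1,\nu_1)$ and $\pi_2$ of $(\mu_2,\nu_2)$ (these exist for finite-first-moment measures on $\R^d$ by \cite[Ch.~6]{villani2008optimal}; alternatively one takes near-optimal couplings and passes to the limit), so that $\int \|x_i-y_i\|\, \pi_i(\dd x_i,\dd y_i) = \W_1(\mu_i,\nu_i)$ for $i=1,2$. Next I would form the product measure $\pi_1\otimes\pi_2$ and, after reordering the four coordinate blocks to group the two ``source'' variables $(x_1,x_2)$ and the two ``target'' variables $(y_1,y_2)$, verify that it is a coupling of $\mu_1\otimes\mu_2$ and $\nu_1\otimes\nu_2$: its $(x_1,x_2)$-marginal is $\mu_1\otimes\mu_2$ and its $(y_1,y_2)$-marginal is $\nu_1\otimes\nu_2$, because marginalization commutes with taking products.

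The bound then follows by inserting this particular coupling into the infimum defining $\W_1$ and using additivity of the product metric together with Tonelli:
\begin{align*}
	\W_1(\mu_1\otimes\mu_2, \nu_1\otimes\nu_2)
	&\leq \int \big(\|x_1-y_1\| + \|x_2-y_2\|\big)\, (\pi_1\otimes\pi_2)(\dd x,\dd y) \\
	&= \int \|x_1-y_1\|\, \pi_1(\dd x_1,\dd y_1) + \int \|x_2-y_2\|\, \pi_2(\dd x_2,\dd y_2) \\
	&= \W_1(\mu_1,\nu_1) + \W_1(\mu_2,\nu_2),
\end{align*}
where the separation into two integrals uses that each $\pi_i$ is a probability measure, so the ``other'' factor integrates to $1$.

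I expect the only real obstacle to be bookkeeping rather than substance: confirming that the product of two couplings is genuinely a coupling of the product measures under the correct identification of coordinates, and justifying the passage to the primal representation via Kantorovich duality. A purely dual alternative—taking a $1$-Lipschitz $f$ on $\R^{2d}$ and attempting to split it—is less clean because such an $f$ mixes the two coordinate blocks and does not decompose additively; the coupling argument sidesteps this entirely, which is why I would favor it.
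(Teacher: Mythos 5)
Your proposal is correct and follows essentially the same route as the paper's proof: both pass to the primal (coupling) formulation of $\W_1$, take optimal couplings $\gamma_1$ of $(\mu_1,\nu_1)$ and $\gamma_2$ of $(\mu_2,\nu_2)$, verify that $\gamma_1\otimes\gamma_2$ is a coupling of the product measures, and conclude by additivity of the $\ell_1$ product metric. If anything, your write-up is slightly cleaner, since it avoids an imprecise intermediate step in the paper where the infimum is split across the sum before the specific product coupling is substituted.
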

\begin{proof}
Let $\gamma_1\in \calC(\mu_1,\nu_1), \gamma_2\in \calC(\mu_2, \nu_2)$ be optimal for $c(x,y)=\|x-y\|_1$. Note that $\gamma_1\otimes\gamma_2\in \calC(\mu_1\otimes\mu_2, \nu_1\otimes \nu_2)$, i.e. $\gamma_1\otimes\gamma_2$ is a transfer plan with the correct marginals, by considering 
\begin{align*}
	\int_{\R^d\times \R^d}\dd \gamma_1\otimes\gamma_2(x_1, x_2,y_1,y_y) &= \int_{\R^d\times \R^d} \dd \gamma_1(x_1, y_1)\dd\gamma(x_2, y_2)\\
	&= \int_{\R^d} \dd\gamma_1( x_1, y_1)\int_{\R^d} \dd\gamma_2( x_2, y_2) \\
	&= \nu_1(\dd y_1)\nu_2(\dd y_2) = \dd \nu_1\otimes\nu_2(y_1,y_2)
\end{align*}
	and same for the other marginals.

	Thus we have
	\begin{align*}
		\W_1(\mu_1\otimes\mu_2, \nu_1\otimes \nu_2) &= \inf_{\gamma\in \calC(\mu_1\otimes\mu_2, \nu_1\otimes \nu_2)}\int \|(x_1,x_2) - (y_1,y_2)\|\dd\gamma(x_1,x_2, y_1,y_2)\\
		&= \inf_{\gamma\in \calC(\mu_1\otimes\mu_2, \nu_1\otimes \nu_2)}\int (\|x_1 - y_1\| +  \|y_1,y_2\|)\dd\gamma(x_1,x_2, y_1,y_2)\\
		&= \inf_{\gamma\in \calC(\mu_1\otimes\mu_2, \nu_1\otimes \nu_2)}\int \|x_1 - y_1\|\dd\gamma(x_1,x_2, y_1,y_2) +\cdots \\
		&\hspace{1cm}\cdots + \inf_{\gamma\in \calC(\mu_1\otimes\mu_2, \nu_1\otimes \nu_2)}\int \|x_2 - y_2\|\dd\gamma(x_1,x_2, y_1,y_2)\\
		& \leq \int \|x_1 - y_1\|\dd\gamma_1\otimes \gamma_2(x_1,x_2, y_1,y_2) + \int \|x_2 - y_2\|\dd\gamma_1\otimes\gamma_2(x_1,x_2, y_1,y_2)\\
		&= \int \|x_1 - y_1\|\dd\gamma_1(x_1, y_1) + \int \|x_2 - y_2\|\dd\gamma_2(x_2, y_2)\\
		&= \W_1(\mu_1,\nu_1) + \W_1(\mu_2, \nu_2)
	\end{align*}

\end{proof}

\begin{proposition}\label{prop:psi_contract_unbounded}
	Let $E$=$\R^d$ and suppose $X=\{x_1,\dots,x_N\}$ and $Y=\{y_1,\dots,y_N\}$ Let $\mu=m(X),~\nu=m(Y)$. Then for $x\in \supp{\mu}$ and $y\in \supp{\nu}$, we have

\[
        \W_1(\Psi_{G(x,\bdot)}(\mu),\Psi_{G(y,\bdot)}(\nu))\leq \left[\sqrt{d} \sqrt{\ln{N} +\frac{1}{2e}}\|G\|_{Lip} + \|G\|_{\infty} + \sqrt{d} + 2\right](d(x,y) + \W_1(\mu,\nu)).
\] 
\end{proposition}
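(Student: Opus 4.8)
The plan is to reduce to Kantorovich--Rubinstein duality and bound $|\Psi_{G(x,\bdot)}(\mu)(f)-\Psi_{G(y,\bdot)}(\nu)(f)|$ uniformly over functions $f$ that are $1$-Lipschitz in the $\ell_1$ metric, and then take the supremum. Writing $\phi_x:=G(x,\bdot)$, $\phi_y:=G(y,\bdot)$, $A:=\sum_k\phi_x(x_k)$ and $B:=\sum_k\phi_y(y_k)$, the first observation I would exploit is that the hypotheses $x\in\supp\mu$ and $y\in\supp\nu$ force $\phi_x(x)=\phi_y(y)=G(z,z)=1$, hence $A\ge 1$ and $B\ge 1$. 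This is the unbounded-domain substitute for the lower bound $\eps(G)>0$ used in Proposition~\ref{prop:psi_contract}: since $\inf_{E\times E}G=0$ for the Gaussian potential, no uniform lower bound on the normalizers exists, but the support conditions still pin them away from $0$. Crucially, this only controls the \emph{joint} normalizer $AB\ge 1$, which is why I would \emph{not} split the estimate through an intermediate such as $\Psi_{G(y,\bdot)}(\mu)$: any two-step triangle inequality evaluates a potential centered at a point outside the support of its base measure, whose normalizer can be of order $1/N$ and produces a catastrophic factor of $N$.

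Working jointly, I would start from the exact identity
\[
\Psi_{G(x,\bdot)}(\mu)(f)-\Psi_{G(y,\bdot)}(\nu)(f)=\frac{1}{AB}\sum_{i,j}\phi_x(x_i)\,\phi_y(y_j)\,\bigl[f(x_i)-f(y_j)\bigr],
\]
which is manifestly invariant under $f\mapsto f+c$, so I may center $f$ at $x$. Since $|X|=|Y|=N$, the optimal coupling for $\W_1(\mu,\nu)$ is a bijective matching, so after relabelling $\W_1(\mu,\nu)=\tfrac1N\sum_i\|x_i-y_i\|_1$, and I set $\Delta_i:=y_i-x_i$. Expanding the numerator around the ``diagonal'' via $\phi_y(y_j)=\phi_x(x_j)+\varepsilon_j$ and $f(y_j)=f(x_j)+\eta_j$, the leading term $\sum_{i,j}\phi_x(x_i)\phi_x(x_j)[f(x_i)-f(x_j)]$ vanishes by antisymmetry. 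The surviving corrections are then controlled by $|\varepsilon_j|\le\|G\|_{Lip}(\|x-y\|_1+\|\Delta_j\|_1)$ (joint Lipschitzness of $G$, the source of both the $\|G\|_{Lip}$ factor and the additive $d(x,y)$ and $\W_1(\mu,\nu)$), by $|\eta_j|\le\|\Delta_j\|_1$, and by $\phi\le\|G\|_\infty$.

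To keep the constant free of $N$, the essential estimate is on the softmatch-weighted spreads. Writing $\alpha_i=\phi_x(x_i)/A$ for the weights of $\Psi_{G(x,\bdot)}(\mu)$, I would bound $\sum_i\alpha_i\|x_i-x\|_1\le\sqrt d\sum_i\alpha_i\|x_i-x\|_2$ (using $\|\cdot\|_1\le\sqrt d\,\|\cdot\|_2$, which is where the prefactor $\sqrt d$ enters) and then apply Lemma~\ref{lem:ratio} with $z_i=\|x_i-x\|_2$: because $x\in\supp\mu$, one index contributes the term $e^{0}=1$ to the denominator, producing exactly the ``$1+\sum$'' normalisation of that lemma and the bound $\sqrt d\,\sqrt{\ln N+\tfrac1{2e}}$; the symmetric argument handles the $\nu$-spread centered at $y$. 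Lemma~\ref{lem:local_lip} is used to control the Lipschitz constants of the product functions $z\mapsto\phi(z)f(z)$ from their behaviour on unit balls, so that the Gaussian decay of $\phi$ can be leveraged. Assembling these with the $\|G\|_{Lip}$ factor coming from $\varepsilon_j$ yields the advertised constant $\sqrt d\,\sqrt{\ln N+\tfrac1{2e}}\|G\|_{Lip}+\|G\|_\infty+\sqrt d+2$ multiplying $(d(x,y)+\W_1(\mu,\nu))$, after which I take the supremum over $1$-Lipschitz $f$.

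The hard part will be precisely the bookkeeping in the correction terms: estimated one at a time, several of them carry a spurious factor of $N$ (the normalizer gives only $1/B\le 1$, while $\sum_i\|\Delta_i\|_1=N\,\W_1(\mu,\nu)$ and $\sum_j|\varepsilon_j|\lesssim N\|G\|_{Lip}(d(x,y)+\W_1(\mu,\nu))$). These terms are finite only because of the global antisymmetric cancellation inherited from the vanishing diagonal term, together with the exponential decay of $G$ that concentrates the weights $\alpha_i,\beta_i$ near $x,y$. I therefore expect that the corrections must be estimated while retaining this cancellation structure rather than term-by-term, so that every occurrence of $\sum_i\|\Delta_i\|_1$ is ultimately paired with the uniform weight $1/N$ that turns it into $\W_1(\mu,\nu)$ and every weighted spread is absorbed by Lemma~\ref{lem:ratio}. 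Verifying that no net factor of $N$ survives this recombination is the crux of the argument.
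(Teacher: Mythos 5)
Your setup shares all of the paper's ingredients---Kantorovich duality over $1$-Lipschitz $f$, the observation that $x\in\supp{\mu}$, $y\in\supp{\nu}$ pins both normalizers $A,B\geq 1$ after clearing the $1/N$, Lemma~\ref{lem:ratio} for the softmatch-weighted spread (whence $\sqrt{d}\sqrt{\ln N+\tfrac{1}{2e}}$), and Lemma~\ref{lem:local_lip} plus Gaussian decay to control $Gf$. But your central decomposition (expansion around the diagonal after an optimal matching) is different from the paper's, and it contains a genuine gap that you yourself flag as ``the crux'': the correction terms in your expansion cannot be bounded individually, and you give no mechanism for realizing the cancellation between them. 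This is not deferred bookkeeping; it is the entire difficulty. Concretely, take all $x_i$ equal to $x$, and $y_1=y=x$ with $y_2,\dots,y_N$ pairwise far apart and far from $y$. Then $A=N$, $B\approx 1$, and your $\eta$-term equals $\tfrac{1}{B}\sum_j\phi_x(x_j)\eta_j$, which is of order $N\,\W_1(\mu,\nu)$; the $\varepsilon$-term is of the same order with opposite sign, and only their sum is small (as it must be, since here $\Psi_{G(x,\bdot)}(\mu)$ and $\Psi_{G(y,\bdot)}(\nu)$ are both exponentially close to $\delta_x$). So the bounds you state, $|\eta_j|\leq\|\Delta_j\|_1$, $|\varepsilon_j|\leq\|G\|_{Lip}(\|x-y\|_1+\|\Delta_j\|_1)$, $\phi\leq\|G\|_\infty$, are provably insufficient on their own, and a proof along your lines would have to keep the terms coupled and exhibit the cancellation explicitly---which the proposal does not do.

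The paper's proof avoids this issue entirely with a different split, and the missing idea is precisely the one lemma you never invoke, Lemma~\ref{lem:tensor}. Write the normalizers and numerators as integrals against product measures, $\mu(G_x)=(\delta_x\otimes\mu)(G)$ and $\mu(G_xf)=(\delta_x\otimes\mu)(Gf)$, and split
\begin{equation*}
\left|\frac{\mu(G_xf)}{\mu(G_x)}-\frac{\nu(G_yf)}{\nu(G_y)}\right|
\;\leq\; \frac{1}{\mu(G_x)}\bigl|\mu(G_xf)-\nu(G_yf)\bigr|
\;+\;\frac{\nu(G_yf)}{\mu(G_x)\nu(G_y)}\bigl|\mu(G_x)-\nu(G_y)\bigr|,
\end{equation*}
so that every denominator is a normalizer evaluated at a point of its own measure's support (hence $\geq 1$), and the ``crossed'' normalizer you rightly identify as catastrophic never appears. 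Each remaining difference is a difference of integrals of a single Lipschitz function on $E\times E$ ($G$, bounded by $\|G\|_{Lip}$, or $Gf$, bounded by $\|G\|_\infty+\sqrt{d}+2$ via Lemma~\ref{lem:local_lip} and the mean value theorem) against $\delta_x\otimes\mu$ versus $\delta_y\otimes\nu$; duality plus Lemma~\ref{lem:tensor} then give the factor $\W_1(\delta_x\otimes\mu,\delta_y\otimes\nu)\leq d(x,y)+\W_1(\mu,\nu)$ directly, while Lemma~\ref{lem:ratio} handles the ratio $\nu(G_yf)/\nu(G_y)$. Every term is bounded separately and no cancellation between divergent quantities is needed. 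If you wish to salvage your matching-based expansion, this tensorization is what it is missing: it converts the simultaneous change of center and of measure into a single Wasserstein distance on the product space, which is exactly what your $\varepsilon_j/\eta_j$ bookkeeping attempts to reconstruct by hand.
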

\begin{proof}
We use the Kantorovich formulation of $\W_1$. Let $f$ be a function with $\|f\|_{Lip} \leq 1$. Using the same kind of technique as in Section~\ref{app:stab_proofs}, we can assume without loss of generality that $f(y) = 0$. For simplicity, we write $G(x,\bdot) = G_x$. We wish to upper-bound the quantity $|\Psi_{G_x}(\mu)(f) - \Psi_{G_y}(\nu)(f)|$.

Because $\Psi_{G_x}$ and $\Psi_{G_y}$ are homonegeous in their measure argument, and for the sake of simplicity, we write $\mu = \sum_i \delta_{x_i}$ $\nu = \sum_i \delta_{y_i}$ (which is equivalent to simplifying by $1/N$ in e.g. the numerator and denominator of $\Psi_{G_x}$). This guarantees in particular that $\mu(G_x) \geq 1$ and $\nu(G_y) \geq 1$ ($x$ and $y$ are in $\supp{\mu}$ and $\supp{\nu}$ resp.) and equivalently that $1 / \mu(G_x) \leq 1$ and  $1 / \nu(G_y) \leq 1$.

Then:
    \begin{align}
	|\Psi_{G_x}(\mu)(f) - \Psi_{G_y}(\nu)(f)| &= \frac{1}{\mu(G_x)\nu(G_y)}|\nu(G_y)\mu(G_xf) - \mu(G_x)\nu(G_yf)| \nonumber \\
	&= \frac{1}{\mu(G_x)\nu(G_y)}|\nu(G_y)\mu(G_xf) - \nu(G_y)\nu(G_yf) + \nu(G_y)\nu(G_yf) - \mu(G_x)\nu(G_yf)| \nonumber \\
	&\leq  \frac{\nu(G_y)}{\mu(G_x)\nu(G_y)}|\mu(G_xf) - \nu(G_yf)| + \frac{\nu(G_yf)}{\mu(G_x)\nu(G_y)}|\nu(G_y) - \mu(G_x)|.\label{eq:up}
\end{align}
We start by bounding the second term of~\eqref{eq:up}. We have:
\begin{align*}
	\frac{\nu(G_yf)}{\mu(G_x)\nu(G_y)}|\nu(G_y) - \mu(G_x)| &= \frac{\nu(G_yf)}{\mu(G_x)\nu(G_y)}|(\delta_x\otimes \mu)(G) - (\delta_y\otimes \nu)(G)|\\
	&\leq \frac{\nu(G_yf)}{\mu(G_x)\nu(G_y)}\|G\|_{Lip}\W_1(\delta_x\otimes \mu,\delta_y\otimes \nu).
\end{align*}
Here, $\delta_x\otimes \mu$ denotes the product of the two measures on $E \times E$. Since $f(y) = 0$, we see that $f(z) \leq  f(y) + \|f\|_{Lip}\|y-z\|_1\leq \|y-z\|_1$. This gives:
\begin{align*}
	\frac{\nu(G_y f)}{\nu(G_y)} = \frac{\int G_y(z)f(z) \nu(\dd z)}{\int G_y(z) \nu(\dd z)} &\leq \frac{\int G_y(z) \|y - z\|_1 \nu(\dd z)}{\int G_y(z) \nu(\dd z)} \\ 
	&\leq \frac{\sum^N_{i=1} G(y,y_i) \|y - y_i\|_1}{\sum^N_{i=1} G(y,y_i) } \leq \sqrt{d} \frac{\sum^N_{i=1} e^{-\|y-y_i\|_2^2} \|y - y_i\|_2}{\sum^N_{i=1} e^{-\|y-y_i\|_2^2}},
\end{align*}
where we applied Cauchy-Schwartz for the last inequality.  Since $y = y_i$ for a given $i$, we are interested in the quantity $\frac{\sum^{N-1}_{i=1} z_i e^{-z_i^2} }{1 + \sum^{N-1}_{i=1} e^{-z_i^2}}$ for arbitrary $z_i \geq 0$. Applying Lemma~\ref{lem:ratio} with $n = N-1$ gives an upper-bound of $\sqrt{\ln{N} + \frac{1}{2e}}$.

Let us now consider the first term of~\eqref{eq:up}:
\begin{align*}
	\frac{\nu(G_y)}{\mu(G_x)\nu(G_y)}|\mu(G_xf) - \nu(G_yf)|&= \frac{1}{\mu(G_x)}|\mu(G_xf) - \nu(G_yf)|\\
	&\leq  \frac{1}{\mu(G_x)}\|Gf\|_{Lip}\W_1(\delta_x\otimes \mu,\delta_y\otimes \nu).
\end{align*}
To estimate $\|Gf\|_{Lip}$ we have
\[
	\|Gf\|_{Lip} = \sup_{(x,w)\neq (y,z)} \frac{|G(x,w)f(w) - G(y,z)f(z)|}{\|(x,w) - (y,z)\|_1}
\]
where additionally, we can assume that $\|(x,w) - (y,z)\| \leq 1$ (see Lemma~\ref{lem:local_lip}). We have:
\begin{align*}
	|G(x,w)f(w) - G(y,z)f(z)|&= |G(x,w)f(w) - G(x,w)f(z) + G(x,w)f(z) - G(y,z)f(z)|\\
	&\leq |G(x,w)||f(w)- f(z)| + |f(z)||G(x,w) - G(y,z)|. 
\end{align*}
For the first term, we see that
\begin{align*}
	|G(x,w)||f(w) - f(z)| &\leq \|G\|_{\infty,\infty} \|f\|_{Lip}d(w,z)\\
	&\leq \|G\|_{\infty,\infty} \|f\|_{Lip}(d(w,z) + d(x,y)).
\end{align*}
For the second term, we have

\begin{align*}
    |f(z)||G(x,w) - G(y,z)| &\leq \|y - z\|_1 |G(x,w) - G(y,z)| \\
    &\leq \|y - z\|_1 \|\nabla G(t_1, t_2))\|_\infty \|(x,w) - (y,z)\|_1,
\end{align*}
for $t_1$ in the segment $[x,y]$ and $t_2$ in the segment $[w,z]$ (this follows directly from the mean value theorem, note that the gradient is taken with respect to both variables). We used $f(y) = 0$ and $f(z) \leq  f(y) + \|f\|_{Lip}\|y-z\|_1 = \|y-z\|_1$ in the first line.

\noindent
In the Gaussian case:
\begin{align*}
    \|y - z\|_1 \|\nabla G(t_1, t_2))\|_\infty &\leq (\|y - t_1\|_1  + \|t_1 - t_2\|_1 + \|t_2 - z\|_1) 2 \|t_1 - t_2\|_\infty e^{- \|t_1 - t_2\|^2_2} \\
    &\leq 2 (2 + \|t_1 - t_2\|_1) \|t_1 - t_2\|_\infty e^{- \|t_1 - t_2\|^2_2},
\end{align*}
where we used the fact that $\|y - t_1\|_1 \leq 1$ and $\|t_2 - z\|_1 \leq 1$ ($t_1$ is in the $[x,y]$ segment and $\|x - y\|_1 \leq 1$ by assumption). That upper bound is uniformly bounded with respect to $t_1$ and $t_2$, we let $C$ denote that constant. A loose upper-bound on $C$ is $\sqrt{d}+2$ (which we use in the statement of the proposition).

To conclude, it suffices to note that  by Lemma~\ref{lem:tensor} we have
\[
	\W_1(\delta_x \otimes \mu , \delta_y\otimes \nu)\leq \W_1(\delta_x,\delta_y) + \W_1(\mu,\nu).
\]
\end{proof}

\tauAunbounded*
\begin{proof}
    Firstly, using Proposition~\ref{prop:measure_attn}, we know that  $\mu\mbf{A}_{\mu}$ is another empirical measure concentrated on $\{\attn(x_i,X,X)\}$, similarly, $\nu\mbf{A}_{\nu}$ is concentrated on $\{\attn(y_i,Y,Y)\}$. This fact allows us to use the following result from~\cite{santambrogio2015optimal}~Equation 6.2
    \begin{align*}
    	\W_1(\mu,\nu) &= \min\left\{\sum_{i,j}\gamma_{ij}d(x_i,y_j) \st \gamma_{i,j}\geq 0,~\sum_i\gamma_{ij}=\frac{1}{M},~\sum_j \gamma_{ij}=\frac{1}{N}\right\},
    \end{align*}
    Applied to $\W_1(\mu \mbf{A}_{\mu},\nu \mbf{A}_{\nu})$, it gives
	\begin{align*}
		\W_1(\mu\mbf{A}_{\mu},\nu\mbf{A}_\nu)&= \min\Big\{\sum_{i,j}\gamma_{ij}d(\attn(x_i,X,X),\attn(y_j,Y,Y)) \st \\
		&\hspace{4cm}\gamma_{i,j}\geq 0,~\sum_i\gamma_{ij}=\frac{1}{M},~\sum_j \gamma_{ij}=\frac{1}{N}\Big\} \\
    	&= \min\Big\{\sum_{i,j}\gamma_{ij}\W_1(\mbf{A}_{\mu}(x_i,\bdot),\mbf{A}_{\nu}(y_i,\bdot)) \st \\
		&\hspace{4cm} \gamma_{i,j}\geq 0,~\sum_i\gamma_{ij}=\frac{1}{M},~\sum_j \gamma_{ij}=\frac{1}{N}\Big\}.
	\end{align*}

    Using Lemma~\ref{lem:tau} for each term, we have
		\[
			 \W_1(\mbf{A}_{\mu}(x_i, \bdot), \mbf{A}_{\nu}(y_j, \bdot))\leq \tau(\Pi)\tau(L)\W_1(\Psi_{G(x_i,\bdot)}(\mu),\Psi_{G(y_j,\bdot)}(\nu)).
		\]Now, from Proposition~\ref{prop:psi_contract_unbounded} ($x_i$ belongs to $\supp \mu$ and $y_j$ to $\supp \nu$), we get
		\begin{align*}
		    \W_1(\Psi_{G(x_i,\bdot)}(\mu),&\Psi_{G(y_j,\bdot)}(\nu)) \\
		    &\leq \left[\sqrt{d} \sqrt{\ln{N}+\frac{1}{2e}}\|G\|_{Lip} + \|G\|_{\infty} + \sqrt{d} + 2\right](d(x_i,y_j) + \W_1(\mu,\nu)).
		\end{align*}
		Substituting this back into the above formula, we obtain
		\begin{align*}
		    &\W_1(\mu\mbf{A}_{\mu},\nu\mbf{A}_{\nu})\\
		    &\leq \min\Big\{\sum_{i,j}\gamma_{ij}\W_1(\mbf{A}_{\mu}(x_i,\bdot),\mbf{A}_{\nu}(y_i,\bdot)) \st \gamma_{i,j}\geq 0,~\sum_i\gamma_{ij}=\frac{1}{M},~\sum_j \gamma_{ij}=\frac{1}{N}\Big\}\\
		    &\leq \tau(\Pi)\tau(L) \min\Big\{\sum_{i,j}\gamma_{ij}\left[\sqrt{d} \sqrt{\ln{N}+\frac{1}{2e}}\|G\|_{Lip} + \|G\|_{\infty} + \sqrt{d} + 2\right](d(x_i,y_j) + \W_1(\mu,\nu)) \st \\
		    &\hspace{9cm} \gamma_{i,j}\geq 0,~\sum_i\gamma_{ij}=\frac{1}{M},~\sum_j \gamma_{ij}=\frac{1}{N}\Big\}\\
		    &= \tau(\Pi)\tau(L) \left[\sqrt{d} \sqrt{\ln{N}+\frac{1}{2e}}\|G\|_{Lip} + \|G\|_{\infty} + \sqrt{d} + 2 \right]  \Big(\W_1(\mu,\nu) + \\ &\hspace{5cm}\min\Big\{\sum_{i,j}\gamma_{ij}d(x_i,y_j) \st \gamma_{i,j}\geq 0,~\sum_i\gamma_{ij}=\frac{1}{M},~\sum_j \gamma_{ij}=\frac{1}{N}\Big\} \Big)\\
		    &= \tau(\Pi)\tau(L) \Big[\sqrt{d} \sqrt{\ln{N}+\frac{1}{2e}}\|G\|_{Lip} + \|G\|_{\infty} + \sqrt{d} + 2\Big]  \left(\W_1(\mu,\nu) +  \W_1(\mu,\nu)\right)\\
		    &=2\tau(\Pi)\tau(L) \Big[\sqrt{d} \sqrt{\ln{N}+\frac{1}{2e}}\|G\|_{Lip} + \|G\|_{\infty} + \sqrt{d} + 2\Big]\W_1(\mu,\nu),
		\end{align*}
		where we used in particular $\sum_{i,j} \gamma_{ij} = 1$. The inequality being valid for both $M$ and $N$, taking the $\min$ gives the result.

\end{proof}

\end{document}